\relax
\documentclass[10pt,twocolumn,letterpaper]{article}
\usepackage{cvpr}  
\usepackage{times}  
\usepackage{helvet} 
\usepackage{courier}  
\usepackage[hyphens]{url}  
\usepackage{graphicx}  
\frenchspacing  

\usepackage{amssymb}
\usepackage{pifont}
\newcommand{\xmark}{\ding{55}}%

\usepackage{graphicx} 
\usepackage{subfigure} 


\usepackage[ruled,linesnumbered]{algorithm2e}


\usepackage{helvet}
\usepackage{courier}


\usepackage{makecell}
\usepackage{graphicx}
\usepackage{subfigure}
\usepackage{color}
\usepackage{amsfonts}
\usepackage{amsmath}
\usepackage{amssymb}


\def\etal{{\em et al.}}



\def\mB{{\mathcal B}}

\def\mD{{\mathcal D}}

\def\mH{{\mathcal H}}

\def\mL{{\mathcal L}}

\def\mP{{\mathcal P}}

\def\mS{{\mathcal S}}

\def\mX{{\mathcal X}}
\def\mY{{\mathcal Y}}
\def\mZ{{\mathcal{Z}}}

\DeclareMathAlphabet\mathbfcal{OMS}{cmsy}{b}{n}

\def\0{{\bf 0}}
\def\1{{\bf 1}}




\def\bx{{\bf x}}
\def\by{{\bf y}}
\def\bz{{\bf z}}


\def\mmE{{\mathbb E}}





\def\bx{{\bf x}}

\def\by{{\bf y}}

\def\bz{{\bf z}}

\newtheorem{deftn}{Definition}
\newtheorem{thm}{Theorem}

\newtheorem{remark}{Remark}

\newtheorem{proof}{Proof}

\def\blue{\textcolor{blue}}

\usepackage{times}
\usepackage{epsfig}
\usepackage{graphicx}
\usepackage{amsmath}
\usepackage{amssymb}

\usepackage{subfigure}
\usepackage{booktabs}
\usepackage{multirow}

\usepackage{color}
\usepackage{capt-of}

\usepackage{comment}
\usepackage{array}

\newcommand{\eat}[1]{}

\usepackage[pagebackref=true,breaklinks=true,colorlinks,bookmarks=false]{hyperref}

\cvprfinalcopy 


\setcounter{page}{1}

\begin{document}

\def\mytitle{Closed-loop Matters: Dual Regression Networks for Single Image Super-Resolution}
\title{Closed-loop Matters: Dual Regression Networks for \\ Single Image Super-Resolution}

\author{
    Yong Guo\thanks{Authors contributed equally.}~, Jian Chen$^*$, Jingdong Wang$^*$, Qi Chen, Jiezhang Cao, Zeshuai Deng,\\ Yanwu Xu\thanks{Corresponding author.}~, Mingkui Tan$^{\dagger}$\\
    South China University of Technology, Guangzhou Laboratory, Microsoft Research Asia, Baidu Inc. \\
    \{guo.yong, sechenqi, secaojiezhang, sedengzeshuai\}@mail.scut.edu.cn, \\
    \{mingkuitan, ellachen\}@scut.edu.cn, jingdw@microsoft.com, ywxu@ieee.org 
}

\maketitle

\begin{abstract}
Deep neural networks have exhibited promising performance in image super-resolution (SR) by learning a non-linear mapping function from low-resolution (LR) images to high-resolution (HR) images. However, there are two underlying limitations to existing SR methods. 
{First, learning the mapping function from LR to HR images is typically an ill-posed problem, because there exist infinite HR images that can be downsampled to the same LR image. As a result, the space of the possible functions can be extremely large, which makes it hard to find a good solution.}
Second, 
the paired LR-HR data
may be unavailable in real-world applications and the underlying degradation method is often unknown. For such a more general case, existing SR models often incur the adaptation problem and yield poor performance. To address the above issues, we propose a dual regression scheme by introducing an additional constraint on LR data to reduce the space of the possible functions.
Specifically, besides the mapping from LR to HR images, we learn an additional dual regression mapping estimates the down-sampling kernel and reconstruct LR images, which forms a closed-loop to provide additional supervision.
More critically, since the dual regression process does not depend on HR images, we can directly learn from LR images. In this sense, we can easily adapt SR models to real-world data, \eg, raw video frames from YouTube. Extensive experiments with paired training data and unpaired real-world data demonstrate our superiority over existing methods.
\end{abstract}

\section{Introduction}\label{sec:introduction}

Deep neural networks (DNNs) have been the workhorse of many real-world applications, including image classification~\cite{he2016deep,guo2018double,guo2020multi,guo2019nat,liu2020discrimination,guo2016shallow}, video understanding~\cite{zeng2020dense,zeng2019graph,zeng2019breaking,chen2019relation} and many other applications~\cite{chen2020intelligent,zhang2019whole,zhang2019collaborative,guo2019auto,hu2019multi}.
Recently, image super-resolution (SR) has become an important task that aims at learning a nonlinear mapping to reconstruct high-resolution (HR) images from low-resolution (LR) images. 
Based on DNNs, many methods have been proposed to improve SR performance~\cite{zhang2018image,lim2017enhanced,guo2018dual,guo2020hierarchical,zhang2019ranksrgan}.
However, these methods may suffer from two limitations.

\begin{figure}[t]
	\centering
	\includegraphics[width=0.93\columnwidth]{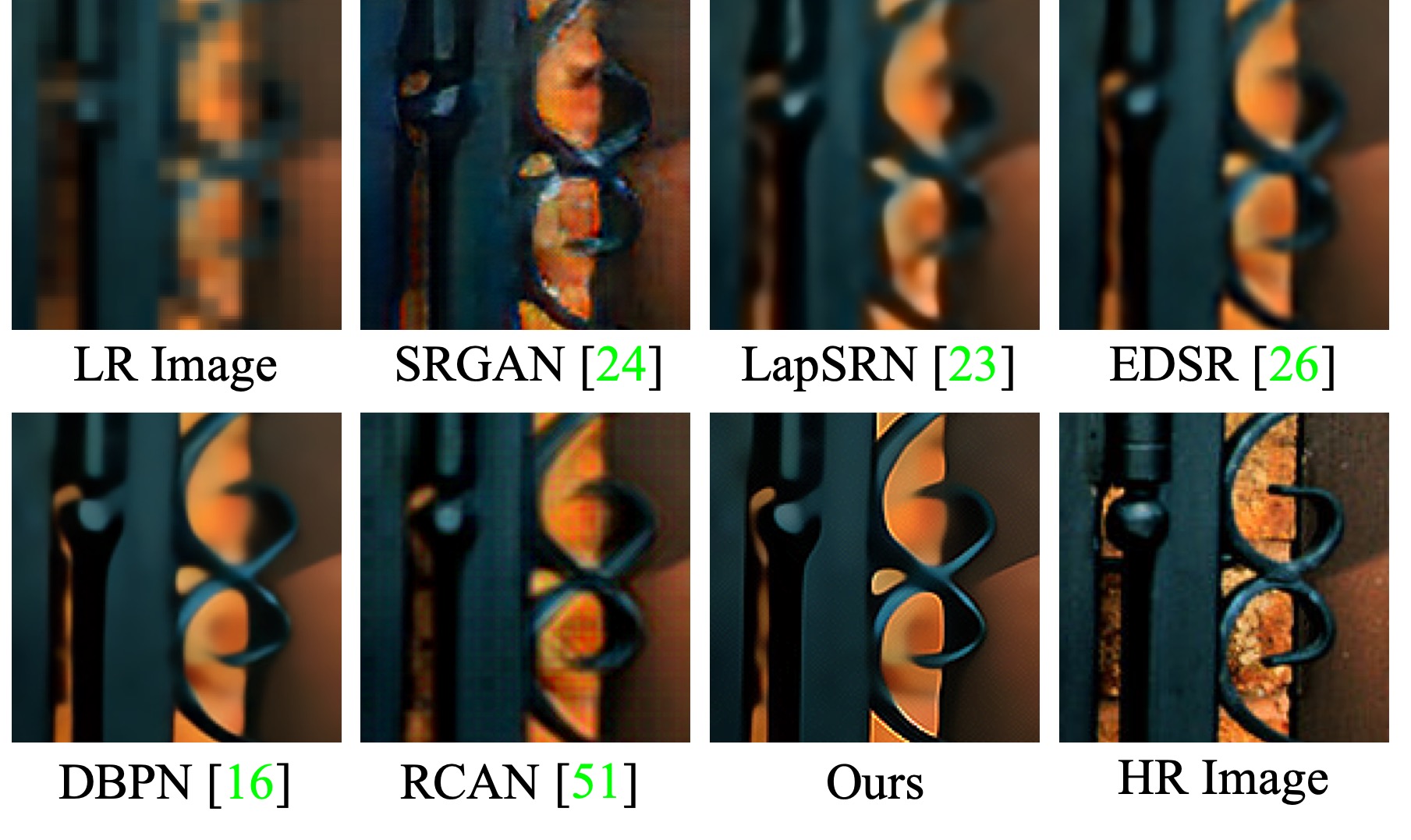}
	\caption{
	    Performance comparison of the images produced by the state-of-the-art methods for $8\times$ SR.
	    Our dual regression scheme is able to produce sharper images than the baseline methods.
	}
	\label{fig:motivation}
\end{figure}

{
First, learning the mapping from LR to HR images is typically an ill-posed problem since there exist infinitely many HR images that can be downscaled to obtain the same LR image~\cite{ulyanov2018deep}.
Thus, the space of the possible functions that map LR to HR images becomes extremely large. {As a result, the learning performance can be limited since learning a good solution in such a large space is very hard.} 
To improve the SR performance, one can design effective models by increasing the model capacity, \eg, EDSR~\cite{lim2017enhanced}, DBPN~\cite{DBPN2018}, and RCAN~\cite{zhang2018image}. 
{However, these methods still suffer from the large space issue of possible mapping functions, resulting in the limited performance without producing sharp textures}~\cite{ledig2016photo} (See Figure~\ref{fig:motivation}). 
Thus, how to reduce the possible space of the mapping functions to improve the training of SR models becomes an important problem.
}

Second, {it is hard to obtain a promising SR model when the paired data are unavailable~\cite{yuan2018unsupervised,zhao2018unsupervised}. Note that most SR methods rely on the paired training data, \ie, HR images with their Bicubic-degraded LR counterparts. However, the paired data may be unavailable and the unpaired data often dominate the real-world applications. Moreover, the real-world data do not necessarily have the same distribution to the LR images obtained by a specific degradation method (\eg, Bicubic). Thus, learning a good SR model for real-world applications can be very challenging.}
{More critically, if we directly apply existing SR models to real-world data, they often incur a severe adaptation problem and yield poor performance~\cite{yuan2018unsupervised,zhao2018unsupervised}.}
{Therefore, how to effectively exploit the unpaired data to adapt SR models to real-world applications becomes an urgent and important problem.}

{In this paper, we propose a novel dual regression scheme {that forms a closed-loop} to enhance SR performance.}
{To address the first limitation, we introduce an additional constraint to reduce the possible space such that the super-resolved images can reconstruct the input LR images.}
{Ideally, if the mapping from LR$\to$HR is optimal, the super-resolved images can be downsampled to obtain the same input LR image.
With such a constraint, we are able to estimate the underlying downsampling kernel and hence reduce the space of possible functions to find a good mapping from LR to HR (See theoretical analysis in Remark~\ref{alg:remark1}). Thus, it becomes easier to obtain promising SR models (See the comparison in Figure~\ref{fig:motivation}).
}
{To address the second limitation, since the regression of LR images does not depend on HR images, our method can directly learn from the LR images.}
In this way, we can easily adapt SR models to the real-world LR data, \eg, raw video frames from Youtube.
Extensive experiments on the SR tasks with paired training data and unpaired real-world data demonstrate the superiority of our method over existing methods.

Our contributions are summarized as follows:
\begin{itemize}
	\item We develop a dual regression scheme by introducing an additional constraint {such that the mappings can form a closed-loop and LR images can be reconstructed to enhance the performance of SR models}. 
	Moreover, we also theoretically analyze the generalization ability of the proposed scheme, which further confirms its superiority to existing methods.
	\item We study a more general super-resolution case {where there is no corresponding HR data w.r.t. the real-world LR data}.
	With the proposed dual regression scheme, deep models can be easily adapted to real-world data, \eg, raw video frames from YouTube.
	\item Extensive experiments on both the SR tasks with paired training data and unpaired real-world data demonstrate the effectiveness of the proposed dual regression scheme in image super-resolution.
\end{itemize}

\section{Related Work}

\noindent\textbf{Supervised super-resolution.}
Many efforts have been made to improve the performance of SR, including the interpolation-based approaches~\cite{hou1978cubic} and reconstruction-based methods~\cite{DBPN2018,li2019feedback,zhang2018image}.
Haris~\etal ~\cite{DBPN2018} propose a back-projection network (DBPN) that consists of several up- and down-sampling layers to iteratively produce LR and HR images.
Zhang~\etal ~\cite{zhang2018image} propose the channel attention mechanism to build a deep model called RCAN to further improve the performance of SR. 
However, these methods still have a very large space of the possible mappings which makes it hard to learn a good solution.

\begin{figure}[t]
	\centering
	\includegraphics[width=0.95\columnwidth]{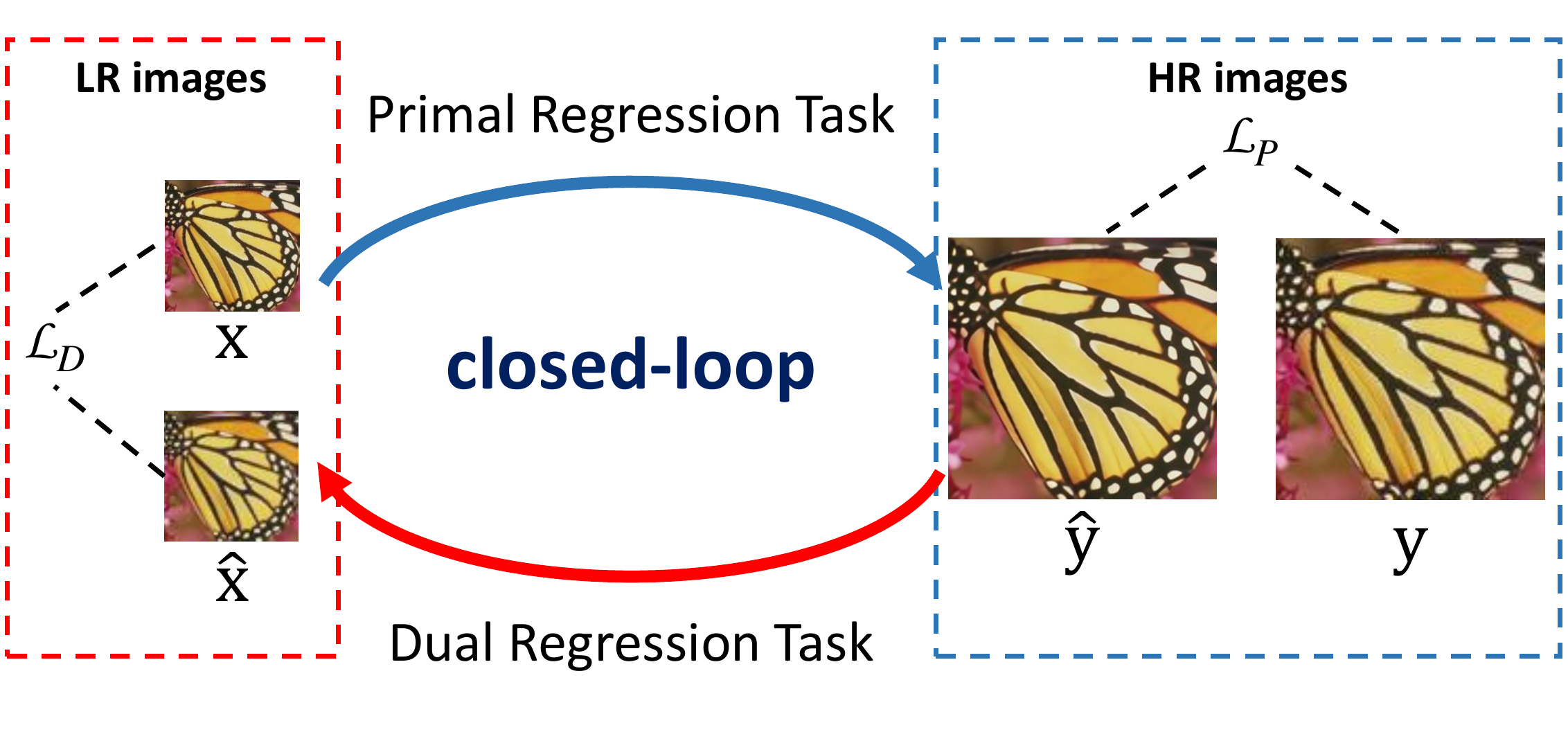}
	\vspace{-5pt}
	\caption{
		Dual regression training scheme, which contains a primal regression task for super-resolution and a dual regression task to project super-resolved images back to LR images. The primal and dual regression tasks form a closed-loop.
	}
	\vspace{-10pt}
	\label{fig:dual_connection}
\end{figure}

\vspace{5pt}
\noindent\textbf{Unsupervised super-resolution.}
There is an increasing interest in learning super-resolution models without paired data in the unsupervised setting~\cite{yuan2018unsupervised, zhao2018unsupervised}.
Based on CycleGAN~\cite{zhu2017unpaired}, Yuan \etal~\cite{yuan2018unsupervised} propose a CinCGAN model to generate HR images without paired data. 
{Recently, some blind SR methods~\cite{ bell2019blind, zhou2019kernel} were proposed to learn the unknown degradation methods.}
However, these methods often totally discard the paired synthetic data, {which can} be obtained very easily and used to boost the training. 
{On the contrary, our dual regression scheme seeks to adapt SR models to new LR data by exploiting both the real-world LR data and the paired synthetic data.}

\vspace{5pt}
\noindent\textbf{Dual learning.}
{
Dual learning methods~\cite{he2016dual,pmlr-v70-xia17a,xia2018model,zhang2018deep}
contain a primal model and a dual model to learn two opposite mappings simultaneously to enhance the performance of language translation.
Recently, this scheme has also been used to perform image translation without paired training data, \eg, CycleGAN~\cite{zhu2017unpaired} and DualGAN~\cite{yi2017dualgan}.}
{Specifically, a cycle consistency loss is proposed to avoid the mode collapse issue of GAN methods~\cite{ zhu2017unpaired,cao2018adversarial,cao2019multi} and help minimize the distribution divergence. However, these methods cannot be directly applied to the standard SR problem.
By contrast, we use the closed-loop to reduce the space of possible functions of SR. Moreover, we consider learning asymmetric mappings and provide a theoretical guarantee on the rationality and necessity of using a cycle.
}

\section{Proposed Method}
{
We propose a dual regression scheme to deal with both the {paired} and {unpaired} training data for super-resolution (SR).
The overall training scheme is shown in Figure \ref{fig:dual_connection}.
}

\subsection{Dual Regression Scheme for Paired Data}\label{sec:DRN}

{Existing methods only focus on learning the mapping from LR to HR images. 
However, 
the space of the possible mapping functions can be extremely large, making the training very difficult.
To address this issue, we propose a dual regression scheme by introducing an additional constraint on LR data. 
Specifically, besides learning the mapping LR$\to$ HR, we also learn an inverse/dual mapping from the super-resolved images back to the LR images.
}

Let $\bx \in \mathcal{X}$ be LR images and $\by \in \mathcal{Y}$ be HR images.
We simultaneously learn the primal mapping $P$ to reconstruct HR images and the dual mapping $D$ to reconstruct LR images.
Note that the dual mapping can be regarded as the estimation of the underlying downsampling kernel.
Formally, we formulate the SR problem into the dual regression scheme which involves two regression tasks.
\begin{deftn} \textbf{\emph{(Primal Regression Task) }}\label{definition: primal_model}
	We seek to find a function $P$: $\mX {\rightarrow} \mY$, such that the prediction $P(\bx)$ is similar to its corresponding HR image $\by$.
\end{deftn}
\begin{deftn} \textbf{\emph{({Dual} Regression Task) }}\label{definition: dual_model}
	We seek to find a function $D$: $\mY {\rightarrow} \mX$, such that the prediction of $D(\by)$ is similar to the original input LR image $\bx$.
\end{deftn}

The primal and dual learning tasks can form a {closed-loop} and provide informative supervision to train the models $ P $ and $ D $.
If $P(\bx)$ was the correct HR image, then the down-sampled image $D(P(\bx))$ should be very close to the input LR image $\bx$.
With this constraint, we can reduce the function space of possible mappings and make it easier to learn a better mapping to reconstruct HR images. 
{To verify this, we provide a theoretical analysis in Section~\ref{sec:analysis}.}

By jointly learning these two learning tasks, we propose to train the super-resolution models as follows.
Given a set of $N$ paired samples $\mS_P = \left\{ (\bx_i, \by_i) \right\}_{i=1}^N$,
where $\bx_i$ and $\by_i$ denote the $i$-th pair of low- and high-resolution images in the set of paired data.
The training loss can be written as
\begin{equation}
\label{eq:dual_regression}
\sum_{i=1}^N \underbrace{\mL_P \Big( P(\bx_i), \by_i \Big)}_{\rm primal~regression~loss} + ~~\lambda \underbrace{\mL_D \Big( D(P(\bx_i)), \bx_i \Big)}_{\rm dual~regression~loss},
\end{equation}
{where $\mL_P$ and $\mL_D$ denote the loss function ($\ell_1$-norm) for the primal and dual regression tasks, respectively.}
Here, $\lambda$ controls the weight of the dual regression loss (See the sensitivity analysis of $\lambda$ in Section~\ref{exp:lambda}).

{
Actually, we can also add a constraint on the HR domain, \ie, downscaling and upscaling to reconstruct the original HR images.
However, it greatly increases the computation cost (approximately $2\times$ of the original SR model) and the performance improvement is very limited (See results in supplementary).
In practice, we only add the dual regression loss on LR data, which significantly improves the performance while preserving the approximately the same cost to the original SR model (See discussions in Section~\ref{sec:architecture}).
}

\subsection{Dual Regression for Unpaired Data}\label{sec:blind} 
{We consider a more general SR case where there is no corresponding HR data w.r.t. the real-world LR data. 
More critically, the degradation methods of LR images are often unknown, making this problem very challenging.
In this case, existing SR models often incur the severe adaptation problem~\cite{yuan2018unsupervised,zhao2018unsupervised}.} 
To alleviate this issue, 
we propose an efficient algorithm to adapt SR models to the new LR data. The training algorithm is shown in Algorithm~\ref{alg:semisupervised}.

\begin{algorithm}[t]
	\label{alg:semisupervised}
	\caption{\small Adaptation Algorithm on Unpaired Data.}
	\KwIn{Unpaired real-world data: $\mS_U$;\\
	    ~~~~~~~~~~~~Paired synthetic data: $\mS_P$;\\
	    ~~~~~~~~~~~~Batch sizes for $\mS_U$ and $\mS_P$: $m$ and $n$;\\
	    ~~~~~~~~~~~~Indicator function: ${\bf{1}}_{\mS_P}(\cdot)$.\\
	}
	Load the pretrained models $P$ and $D$.
	
	\While{not convergent}{
	    Sample unlabeled data $\{ \bx_i \}_{i=1}^{m}$ from $\mS_U$;\\
		Sample labeled data $\{ (\bx_i, \by_i) \}_{i=m+1}^{m+n}$ from $\mS_P$;\\
		// \emph{Update the primal model} \\
		Update $P$ by minimizing the objective:\\
		~~~~~{\small $\sum\limits_{i=1}^{m+n} {\bf{1}}_{\mS_P}(\bx_i) \mL_P\Big( P(\bx_i), \by_i \Big) {+} \lambda \mL_D\Big( D(P(\bx_i)), \bx_i \Big)$} \\
		// \emph{Update the dual model} \\
		Update $D$ by minimizing the objective:\\
		~~~~~~~~~~~~~~~~~~~~{\small $\sum\limits_{i=1}^{m+n} \lambda \mL_D\Big( D(P(\bx_i)), \bx_i \Big)$}  \\
	}
\end{algorithm}

\begin{figure*}[t]
	\centering
	\includegraphics[width=1.8\columnwidth]{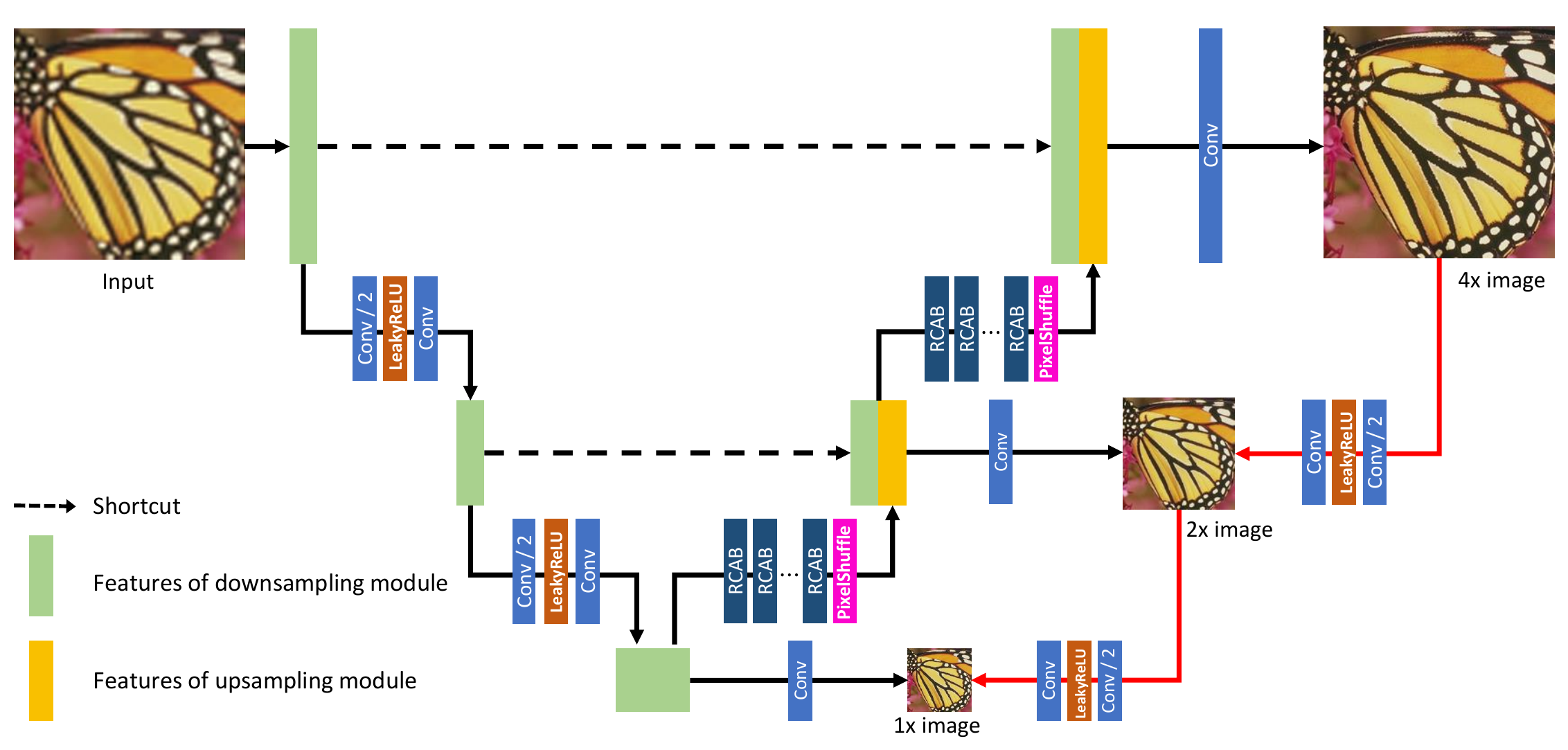}
	\caption{
		The architecture of DRN for 4${\times}$ SR. {DRN} contains a primal network and a dual network (marked as red lines). The green box denotes the feature maps of the downsampling module (left half) while the yellow box refers to the feature maps of the upsampling module (right half). Following U-Net, we concatenate the corresponding shallow and deep feature maps via shortcut connections.
	}
	\label{fig:primal_dual_model}
\end{figure*}

{Note that the dual regression mapping learns the underlying degradation methods and does not necessarily depend on HR images. Thus, we can use it to directly learn from the unpaired real-world LR data to perform model adaptation.}
{To ensure the reconstruction performance of HR images, we also incorporate the information from paired synthetic data that can be obtained very easily (\eg, using the Bicubic kernel).}
{Given $M$ unpaired LR samples and $N$ paired synthetic samples, the objective function can be written as:}
\begin{equation}\label{eq:adaptaion_ojb}
\sum_{i=1}^{M{+}N} {\bf{1}}_{\mS_P}(\bx_i) \mL_P \Big( P(\bx_i), \by_i \Big) + \lambda \mL_D \Big( D(P(\bx_i)), \bx_i \Big),
\end{equation}
where ${\bf{1}}_{\mS_P}(\bx_i)$ is an indicator function that equals 1 when $\bx_i \in \mS_P$, {and otherwise the function equals 0}.

\subsection{Training Method}
\noindent \textbf{Training method on paired data.}
{
Given paired training data, we follow the learning scheme of supervised SR methods~\cite{DBPN2018,lim2017enhanced} and
train model by minimizing Eqn.~(\ref{eq:dual_regression}). More details are shown in Section~\ref{sec:exp} and the supplementary.
}

\noindent \textbf{Training method on unpaired data.}
{As shown in Algorithm~\ref{alg:semisupervised}, for each iteration, we first sample $m$ unpaired real-world data from $\mS_U$ and $n$ paired synthetic data from $\mS_P$, respectively. Then, we train our model end-to-end by minimizing the objective in Eqn.~(\ref{eq:adaptaion_ojb}).  
For convenience, we define the data ratio of unpaired data as 
\begin{equation}\label{eq:rho}
\rho = m / (m + n).
\end{equation}
Since paired synthetic data can be obtained very easily (\eg, performing Bicubic kernel to produce LR-HR pairs), we can adjust $\rho$ by changing the number of paired synthetic samples $n$. In practice, we set $\rho=30\%$ to obtain the best results (See the discussions in Section~\ref{exp:amount}).}
{With the proposed dual regression scheme, we can adapt SR models to the various unpaired data while preserving good reconstruction performance (See results in Section~\ref{sec:semi_results}).
}

\subsection{Differences from CycleGAN based SR Methods}
There are several differences and advantages of {DRN} compared to CycleGAN based SR methods. 
{First, 
CycleGAN based methods~\cite{yuan2018unsupervised,zhu2017unpaired} use a cycle consistency loss to avoid the possible mode collapse issue when solving the under-constrained image translation problem~\cite{zhu2017unpaired}.
Unlike these methods, we seek to improve the performance of our SR model by adding an extra constraint, which reduces the possible function space by mapping the SR images back to the corresponding LR images.}
Second, CycleGAN based methods totally discard the paired synthetic data, which, however, can be obtained very easily.
On the contrary, our DRN simultaneously exploits both paired synthetic data and real-world unpaired data to enhance the training.

\section{More Details}\label{sec:more_discussion}
{In this section, we first depict the architecture of our dual regression network (DRN).
Then, we conduct a theoretical analysis to justify the proposed dual regression scheme.}

\subsection{Architecture Design of DRN}\label{sec:architecture}
We build our {DRN} upon the design of U-Net for super-resolution~\cite{iqbal2019super,ronneberger2015u} (See Figure~\ref{fig:primal_dual_model}).
Our {DRN} model consists of two parts: a primal network and a dual network. We present the details for each network as follows. 

The primal network follows the downsampling-upsampling design of U-Net. Both the downsampling (left half of Figure~\ref{fig:primal_dual_model}) and upsampling (right half of Figure~\ref{fig:primal_dual_model}) modules contain $\log_2(s)$ basic blocks, where $s$ denotes the scale factor. This implies that the network will have 2 {blocks} for 4$\times$ upscaling (See Figure~\ref{fig:primal_dual_model}) and 3 blocks for 8$\times$ upscaling.
Unlike the baseline U-Net, we build each basic block using $B$ residual channel attention block (RCAB)~\cite{zhang2018image} to improve the model capacity.
Following~\cite{wang2015deep,lai2017deep}, we add additional outputs to produce images at the corresponding scale (\ie, $1\times$, $2\times$, and $4\times$ images) and apply the proposed loss to them to train the model. 
{Note that we use the Bicubic kernel to upscale LR images before feeding them into the primal network.}
Please refer to the supplementary for more details.

We design a dual network to produce the down-sampled LR images from the super-resolved ones (See red lines in Figure~\ref{fig:primal_dual_model}).
Note that the dual task aims to learn a downsampling operation, which is much simpler than the primal task for learning the upscaling mapping.
Thus, we design the dual model with only two convolution layers and a LeakyReLU activation layer~\cite{maas2013rectifier}, which has much lower computation cost than the primal model but works well in practice (See results in Section~\ref{sec:exp}). 

\subsection{Theoretical Analysis}\label{sec:analysis}
We theoretically analyze the generalization bound for the proposed dual regression scheme on paired data. Since the case with unpaired data is more complicated, we will investigate the theoretical analysis method in the future.
Due to the page limit, all the definitions, proofs, and lemmas are put in the supplementary.

The generalization error of the 
{dual regression scheme}
is to measure how accurately the algorithm predicts the unseen test data in the primal and dual tasks. Let $E(P, D)=\mmE[\mL_P(P(\bx), \by) {+} {\lambda}\mL_D (D(P(\bx)), \bx)] $ and $\hat{E}(P, D)$ is its empirical loss, we obtain a generalization bound of the proposed model using Rademacher complexity \cite{mohri2012foundations}.

\begin{thm} \label{theorem: generalization bound}
	Let $ \mL_P(P(\bx), \by) {+} {\lambda}\mL_D (D(P(\bx)), \bx) $ be a mapping from $ \mX {\times} \mY $ to $ [0, C] $ with the upper bound $C$, and the function space $ \mH_{dual} $ be infinite. Then, for any {error} $ \delta {>} 0 $, with probability at least $ 1{-}\delta $, the generalization error $E(P, D)$ (\ie, expected loss)  satisfies for all $ (P, D) {\in} \mH_{dual} $:
	\begin{align}
	E(P, D) \leq \hat{E}(P, D) {+} 2 \hat{R}_{\mZ}^{DL}(\mH_{dual}) {+} 3C \sqrt{\frac{1}{2N} \log\left(\frac{1}{\delta}\right)}, \nonumber
	\end{align}
	where $N$ is the number of samples and $\hat{R}_{\mZ}^{DL}$ is the empirical Rademacher complexity of dual learning. 
	Let $\mB(P, D)$ be the generalization bound of the dual regression SR, \ie $\mB(P, D){=}2\hat{R}_{\mZ}^{DL}(\mH_{dual}) {+} 3C \sqrt{\frac{1}{2N} \log\left(\frac{1}{\delta}\right)}$, we have
	\begin{align*}
	    \mB(P, D) \leq \mB(P),
	\end{align*}
	where $\mB(P), P{\in}\mH$ is the generalization bound of the supervised learning \textit{w.r.t.} the Rademacher complexity $\hat{R}_{\mZ}^{SL}(\mH)$.
\end{thm}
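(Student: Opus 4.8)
The plan is to establish the two claims separately: first the Rademacher generalization bound for the composite dual loss, then the comparison $\mB(P,D) \le \mB(P)$.

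For the first inequality I would write $\bz = (\bx,\by)$ and introduce the composite loss class
\[
\mG_{dual} = \Big\{ \bz \mapsto \mL_P\big(P(\bx),\by\big) + \lambda \mL_D\big(D(P(\bx)),\bx\big) : (P,D) \in \mH_{dual} \Big\},
\]
which by hypothesis takes values in $[0,C]$. Then I would invoke the standard Rademacher-complexity generalization bound \cite{mohri2012foundations}: since changing one of the $N$ training pairs alters the supremum over $g \in \mG_{dual}$ of the gap between the expected and the empirical loss by at most $C/N$, McDiarmid's bounded-differences inequality controls the deviation of this supremum from its mean; the symmetrization lemma bounds that mean by $2 R_N(\mG_{dual})$; and a second application of McDiarmid replaces the true Rademacher complexity $R_N$ by the empirical one $\hat{R}_N$, at the price of another $C\sqrt{\log(1/\delta)/(2N)}$ term. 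Identifying $E(P,D)=\mmE[g_{(P,D)}]$, the empirical loss with $\hat{E}(P,D)$, and $\hat{R}^{DL}_{\mZ}(\mH_{dual}) = \hat{R}_N(\mG_{dual})$, and collecting constants, then yields the stated bound; this part is routine once the composite class is set up.

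For the second claim, note that the additive term $3C\sqrt{\log(1/\delta)/(2N)}$ is common to $\mB(P,D)$ and $\mB(P)$ (the primal loss $\mL_P$ is itself bounded by $C$, being dominated by the composite loss), so it suffices to show $\hat{R}^{DL}_{\mZ}(\mH_{dual}) \le \hat{R}^{SL}_{\mZ}(\mH)$. The key structural observation is that the closed-loop constraint shrinks the admissible set of primal mappings: writing $\mH^{dual}_{P} = \{P : (P,D)\in\mH_{dual}\ \text{for some}\ D\}$, the near-invertibility requirement $D(P(\bx))\approx\bx$ essentially ties $D$ to $P$ (as the recovery of the downsampling inverse), so $\mH^{dual}_{P}\subsetneq\mH$ and the dual term $\lambda\mL_D(D(P(\bx)),\bx)$ is, up to a vanishing perturbation, a function of $\bx$ alone. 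I would then bound $\hat{R}_N(\mG_{dual})$ by the empirical Rademacher complexity of the class $\{\bx\mapsto \mL_P(P(\bx),\by):P\in\mH^{dual}_{P}\}$ (the near-fixed dual term contributing nothing after centering), apply Talagrand's contraction lemma to peel off the Lipschitz loss $\mL_P$, and invoke monotonicity of Rademacher complexity under the inclusion $\mH^{dual}_{P}\subseteq\mH$ to conclude.

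The main obstacle is this second claim, and in particular making rigorous the assertion that the closed-loop constraint removes, rather than adds, a degree of freedom. A naive subadditivity bound on the supremum splits $\mG_{dual}$ into a primal piece and a dual piece and leaves a spurious positive dual contribution, which would make the bound \emph{worse}, not better. The argument must therefore exploit the precise definition of $\mH_{dual}$ from the supplementary --- in particular that $D$ is coupled to $P$ through the reconstruction constraint, so that the dual regression term does not enlarge the effective hypothesis class --- after which the contraction and monotonicity steps are standard.
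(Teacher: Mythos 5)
Your treatment of the first inequality coincides with the paper's: the authors simply invoke Theorem 3.1 of \cite{mohri2012foundations} for the composite loss class bounded in $[0,C]$, which is exactly the McDiarmid-plus-symmetrization argument you outline. That part is routine and correct.

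The problem is the second claim, $\mB(P,D)\le\mB(P)$, and you have put your finger on precisely the point where the paper itself is weakest. The paper's entire justification is one sentence: $\hat{R}_{\mZ}^{DL}(\mH_{dual})\le\hat{R}_{\mZ}^{SL}(\mH)$ holds ``because the capacity of the function space $ \mH_{dual} \in \mP \times \mD $ is smaller than the capacity of the function space $ \mH \in \mP $.'' No contraction lemma, no coupling argument, and no formal restriction on the admissible pairs $(P,D)$ is supplied. Your observation that a naive split of the supremum leaves a spurious nonnegative dual contribution --- so that without exploiting the coupling between $D$ and $P$ the inequality would go the \emph{wrong} way --- is correct, and your proposed repair (show the closed-loop constraint ties $D$ to $P$, reduce to a subclass of $\mH$, then apply monotonicity and contraction) is a reasonable strategy. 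But you do not carry it out, and it cannot be carried out from the definitions actually given: $\mH_{dual}$ is introduced only as a subset of $\mP\times\mD$, with no stated uniform reconstruction constraint forcing $D(P(\bx))\approx\bx$ over the whole class, so the dual term is not ``a function of $\bx$ alone up to a vanishing perturbation'' in any sense the paper licenses. Your proposal therefore leaves the key inequality $\hat{R}_{\mZ}^{DL}\le\hat{R}_{\mZ}^{SL}$ unproven --- the same gap the paper has, except that you name the missing ingredient explicitly where the paper asserts the conclusion by fiat. To close it rigorously you would need an additional hypothesis on $\mH_{dual}$, for example that it contains only pairs satisfying $\mL_D(D(P(\bx)),\bx)\le\epsilon$ uniformly (so the dual term contributes at most an additive $O(\lambda\epsilon)$ after centering and the surviving primal class is a genuine subset of $\mH$), or that $D$ is a single fixed downsampler rather than a learned family.
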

This theorem shows the generalization bound of the {dual regression} scheme relies on the Rademacher complexity of a function space $\mH_{dual}$. 
{From} Theorem~\ref{theorem: generalization bound}, the {dual regression SR scheme} has a smaller generalization bound {than traditional SR method}, and thus it helps to achieve more accurate SR predictions. More discussions can be referred to Remark \ref{alg:remark1}.
We highlight that the derived generalization bound of the {dual regression scheme}, where the loss function is bounded by $ [0, C] $, is more general than \cite{pmlr-v70-xia17a}.
{Moreover, this generalization bound is tight when training data is sufficient, and the primal and dual models are powerful enough.}

\begin{figure}[t]
    \centering
	\subfigure[Performance vs. model size for $4 \times$ SR.]{
		\includegraphics[width = 0.96\columnwidth]{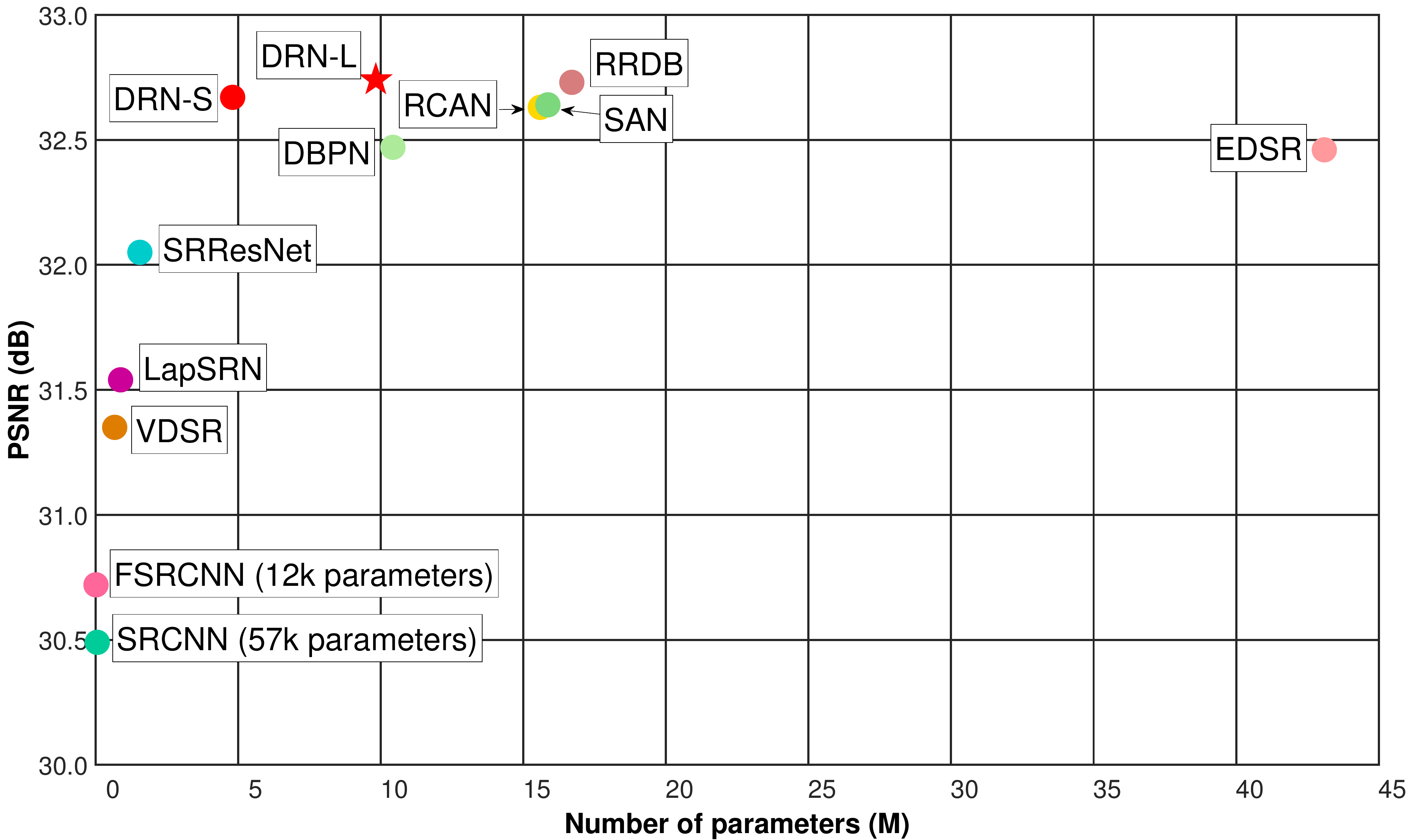}
	}
	\subfigure[Performance vs. model size for $8 \times$ SR.]{
		\includegraphics[width = 0.96\columnwidth]{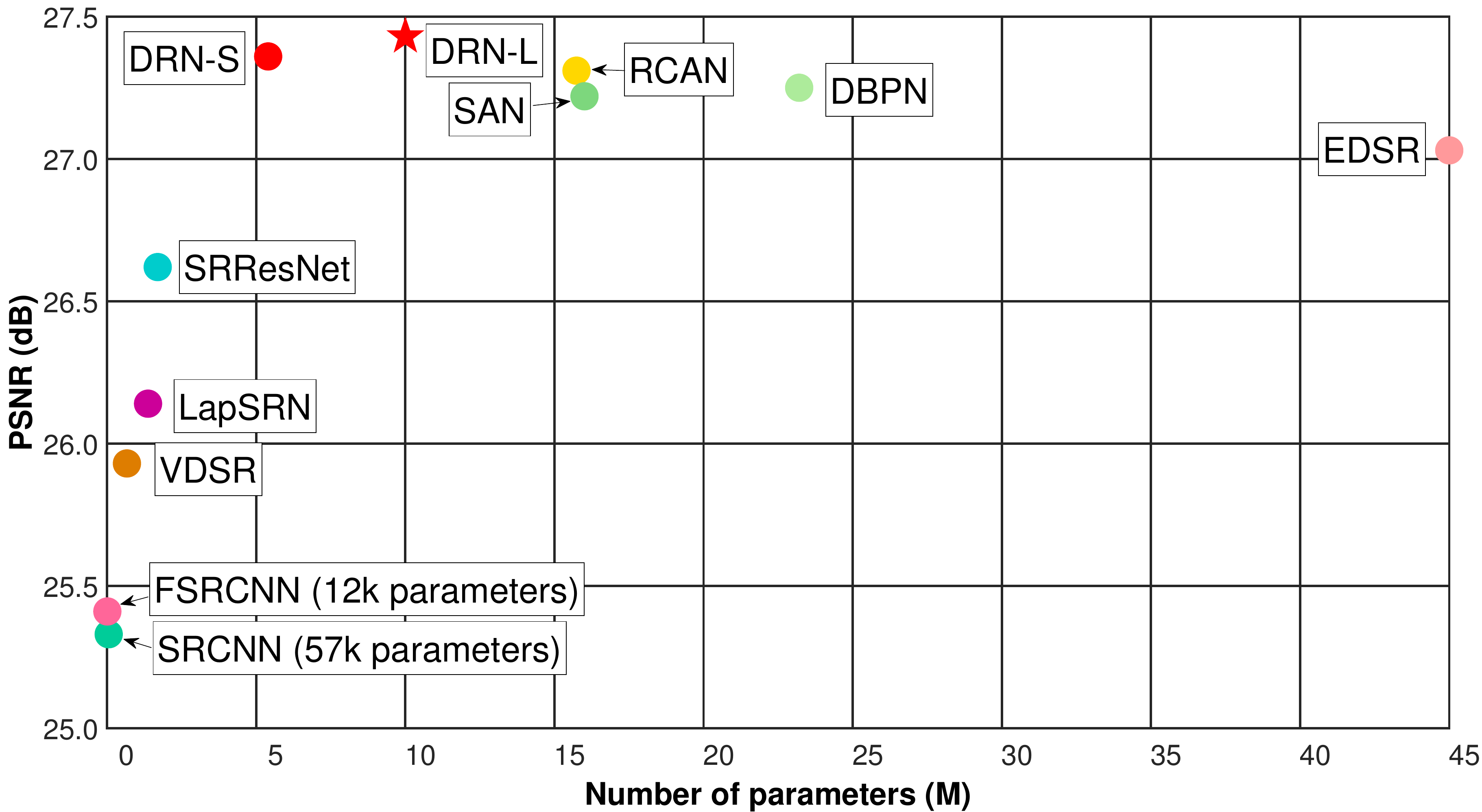}
	}
    \caption{
    Comparisons of the performance and the number of parameters among different $4\times$ SR models on the Set5 dataset.
    }
    \label{fig:complexity_main}
    \vspace{-10 pt}
\end{figure}

\begin{remark}\label{alg:remark1}
	Based on the definition of the Rademacher complexity, the capacity of the function space $ \mH_{dual} \small{\in} \mP \small{\times} \mD $ is smaller than the capacity of function space $ \mH \small{\in} \mP $ or $ \mH \small{\in} \mD $ in traditional supervised learning, \ie, $ \hat{R}_{\mZ}^{DL} \leq \hat{R}_{\mZ}^{SL} $, where $ \hat{R}_{\mZ}^{SL} $ is the Rademacher complexity defined in supervised learning. In other words, the dual regression scheme has smaller generalization bound than the primal feed-forward scheme and the proposed dual regression scheme helps the primal model to achieve more accurate SR predictions.
\end{remark}

\section{Experiments}\label{sec:exp}
{We extensively evaluate the proposed method on the image super-resolution tasks under the scenarios with paired Bicubic data and unpaired real-world data. All implementations are based on PyTorch.\footnote{The source code is available at \href{https://github.com/guoyongcs/DRN}{https://github.com/guoyongcs/DRN}.}}

\begin{table*}[t]
	\centering
	\caption{Performance comparison with state-of-the-art algorithms for 4$\times$ and 8$\times$ image super-resolution. The \textbf{bold} number indicates the best result and the \blue{\textbf{blue}} number indicates the second best result. ``-'' denotes the results that are not reported.
	}
 	\resizebox{1\textwidth}{!}
	{
		{
			\begin{tabular}{c|c|c|ccccc}
				\toprule
				\multicolumn{1}{c|}{\multirow{2}[0]{*}{Algorithms}} &
				\multicolumn{1}{c|}{\multirow{2}[0]{*}{Scale}} & \multicolumn{1}{c|}{\multirow{2}[0]{*}{\#Params (M)}}  & \multicolumn{1}{c}{Set5} & \multicolumn{1}{c}{Set14} & \multicolumn{1}{c}{BSDS100} & \multicolumn{1}{c}{Urban100} & \multicolumn{1}{c}{Manga109} \\
				& & & \multicolumn{1}{c}{PSNR / SSIM} & \multicolumn{1}{c}{PSNR / SSIM} & \multicolumn{1}{c}{PSNR / SSIM} & \multicolumn{1}{c}{PSNR / SSIM} & \multicolumn{1}{c}{PSNR / SSIM} \\
				\hline
				Bicubic  & \multirow{13}[0]{*}{4} & - &  28.42 / 0.810    &    26.10 / 0.702    &   25.96 / 0.667    &   23.15 / 0.657    &   24.92 / 0.789 \\
				ESPCN~\cite{shi2016real}   &   &  -  &  29.21 / 0.851    &   26.40 / 0.744    &   25.50 / 0.696    &   24.02 / 0.726    &   23.55 / 0.795 \\
				SRResNet~\cite{ledig2016photo}  & & 1.6 &   32.05 / 0.891    &   28.49 / 0.782    &   27.61 / 0.736    &   26.09 / 0.783    &   30.70 / 0.908\\
				SRGAN~\cite{ledig2016photo}    &  & 1.6 &   29.46 / 0.838    &   26.60 / 0.718    &   25.74 / 0.666     &   24.50 / 0.736    &   27.79 / 0.856 \\
				LapSRN~\cite{lai2017deep}   &      & 0.9  & 31.54 / 0.885    &   28.09 / 0.770    &    27.31 / 0.727   &   25.21 / 0.756    &   29.09 / 0.890 \\
				SRDenseNet~\cite{tong2017image}   & & 2.0 &   32.02 / 0.893    &   28.50 / 0.778    &   27.53 / 0.733    &   26.05 / 0.781   &    29.49 / 0.899 \\
				EDSR~\cite{lim2017enhanced}   &   &  43.1 &  32.48 / 0.898    &	28.81 / 0.787    & 	27.72 / 0.742     &	  26.64 / 0.803   & 31.03 / 0.915    \\
				DBPN~\cite{DBPN2018}   &    &   10.4   &  32.42 / 0.897 &	28.75 / 0.786     &	   27.67 / 0.739  &	26.38 / 0.794   &	30.90 / 0.913    \\
				RCAN~\cite{zhang2018image} &  &  15.6  & {{32.63}} / {{0.900}} & {{28.85}} / {{0.788}} & {{27.74}} / {{0.743}} & {{26.74}} / {{0.806}} & {{31.19}} / {{0.917}} \\
				SAN~\cite{dai2019second} &   &  15.9  & {{32.64}} / {{0.900}} &	{{28.92}} / {{0.788}} &	27.79 / {{0.743}} &	{{26.79}} / {{0.806}} &	31.18 / 0.916    \\
				RRDB~\cite{wang2018esrgan} &   &  16.7  & \blue{\textbf{32.73}} / 0.901 &	\blue{\textbf{28.97}} / {{0.790}} &	{\textbf{27.83}} / {\textbf{0.745}} &	\blue{\textbf{27.02}} / {\textbf{0.815}} &	 \blue{\textbf{31.64}} / 0.919  \\
				DRN-S &   &  4.8  &  {{32.68}}    /   \blue{\textbf{0.901}}    &   {{28.93}}    /   \blue{\textbf{0.790}}    &   {{27.78}}    /   0.744    &   {26.84}   /   {{0.807}}    &    {{31.52}}   /    \blue{\textbf{0.919}}  \\
				DRN-L &   & 9.8 & {\textbf{32.74}}    /   {\textbf{0.902}}   & {\textbf{28.98}}    /   {\textbf{0.792}}  & {\textbf{27.83}}    /   {\textbf{0.745}} & {\textbf{27.03}}    /   \blue{\textbf{0.813}} & {\textbf{31.73}}    /   {\textbf{0.922}} \\
				\hline
				Bicubic & \multirow{12}[0]{*}{8}  & - & 24.39  /   0.657    &    23.19  / 0.568    &   23.67  /  0.547    &   20.74  /  0.515    &   21.47   / 0.649 \\
				ESPCN~\cite{shi2016real}   &      &  - &  25.02 / 0.697   &   23.45 / 0.598    &   23.92 / 0.574    &   21.20 / 0.554    &  22.04 / 0.683  \\
				SRResNet~\cite{ledig2016photo}   &   & 1.7 &   26.62 / 0.756    &   24.55 / 0.624    &   24.65 /  0.587    &   22.05 /  0.589    &   23.88 / 0.748 \\
				SRGAN~\cite{ledig2016photo}  &    & 1.7  &   23.04 /  0.626    &    21.57 /  0.495    &   21.78  /  0.442    &  19.64  /   0.468    &    20.42  / 0.625 \\
				LapSRN~\cite{lai2017deep}    &   &  1.3 &    26.14  /  0.737    &   24.35  /   0.620    &    24.54 /   0.585   &   21.81  /  0.580    &   23.39   / 0.734 \\
				SRDenseNet~\cite{tong2017image}   &   &  2.3  &   25.99  /  0.704    &   24.23  /  0.581    &   24.45  /   0.530   &    21.67  /   0.562   &   23.09  / 0.712 \\
				EDSR~\cite{lim2017enhanced}   &   &  45.5  &    27.03 / 0.774 & 	25.05 / 0.641 &	24.80 / 0.595 &	22.55 / 0.618  &	24.54 / 0.775  \\
				DBPN~\cite{DBPN2018} &  & 23.2 & 27.25 / 0.786  & 	25.14  /  {{0.649}} &	24.90  /  0.602	 &  22.72  /  0.631 	&  25.14  /  {{0.798}} \\
				RCAN~\cite{zhang2018image} & & 15.7 & {{27.31}}  / {{0.787}}     & {{25.23}}    /   {{0.651}}    &    {{24.96}}    / {{0.605}}    &   \blue{\textbf{22.97}}     / \blue{\textbf{0.643}}     & {{25.23}}   / {{0.802}}  \\
				SAN~\cite{dai2019second} &  & 16.0 & 27.22 / 0.782 &	25.14 / 0.647 &	24.88 / 0.601 &	22.70 / 0.631 &	24.85 / 0.790    \\
				DRN-S &    & 5.4 & \blue{\textbf{27.41}}    /   \blue{\textbf{0.790}}    &   \blue{\textbf{25.25}}    /   \blue{\textbf{0.652}}    &   \blue{\textbf{24.98}}    /   \blue{\textbf{0.605}}    &   {{22.96}}    /   {{0.641}}    &    \blue{\textbf{25.30}}   /    \blue{\textbf{0.805}}  \\
				DRN-L &    & 10.0 &  {\textbf{27.43}}  /   {\textbf{0.792}} &   {\textbf{25.28}}  /   {\textbf{0.653}} &   {\textbf{25.00}}  /   {\textbf{0.606}} &   {\textbf{22.99}}  /   {\textbf{0.644}} &   {\textbf{25.33}}  /   {\textbf{0.806}} \\
				\bottomrule
			\end{tabular}
	}}
	\label{exp:2x_4x_8xsr}
\end{table*}

\begin{figure*}[t]
	\centering
	\subfigure[Visual comparison for $4\times$ super-resolution.]{
		\includegraphics[width = 1\columnwidth]{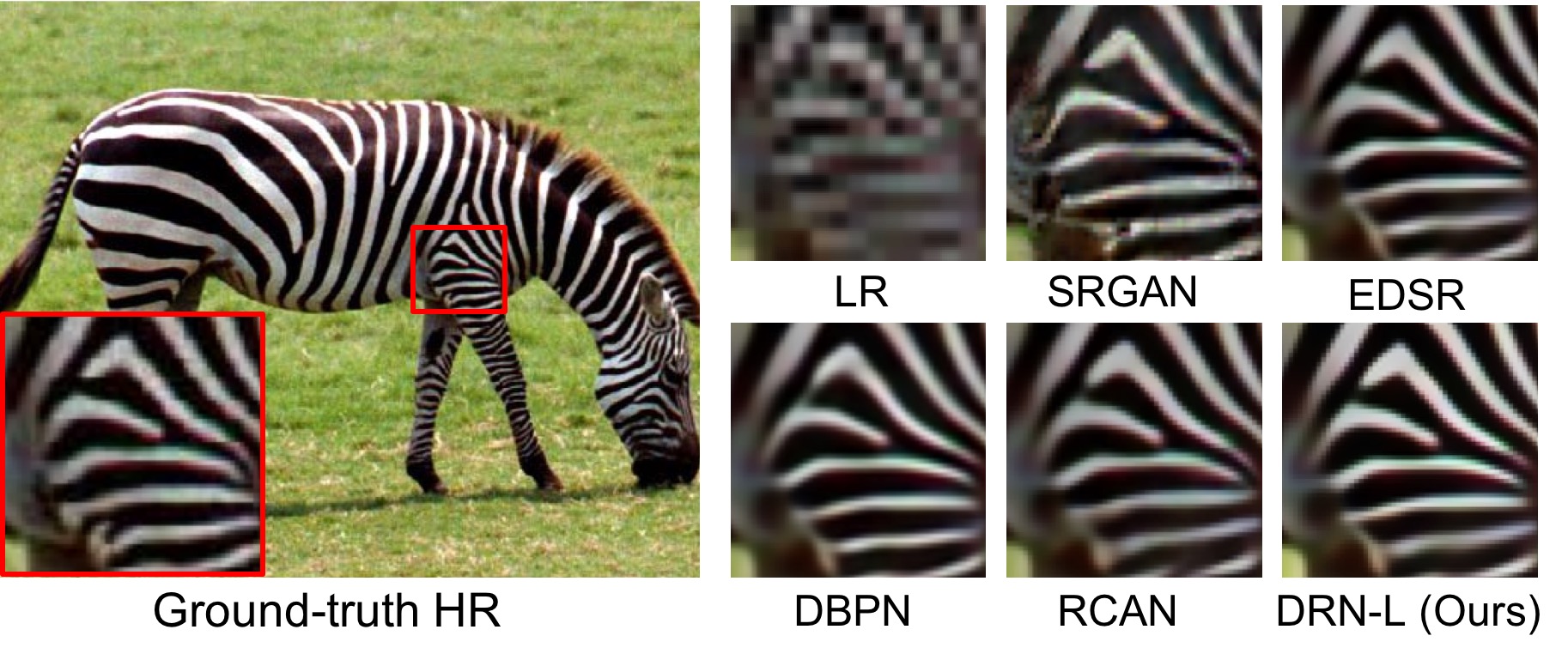}\label{fig:img4x_compare1}
	}
	\subfigure[Visual comparison for $8\times$ super-resolution.]{
		\includegraphics[width = 1\columnwidth]{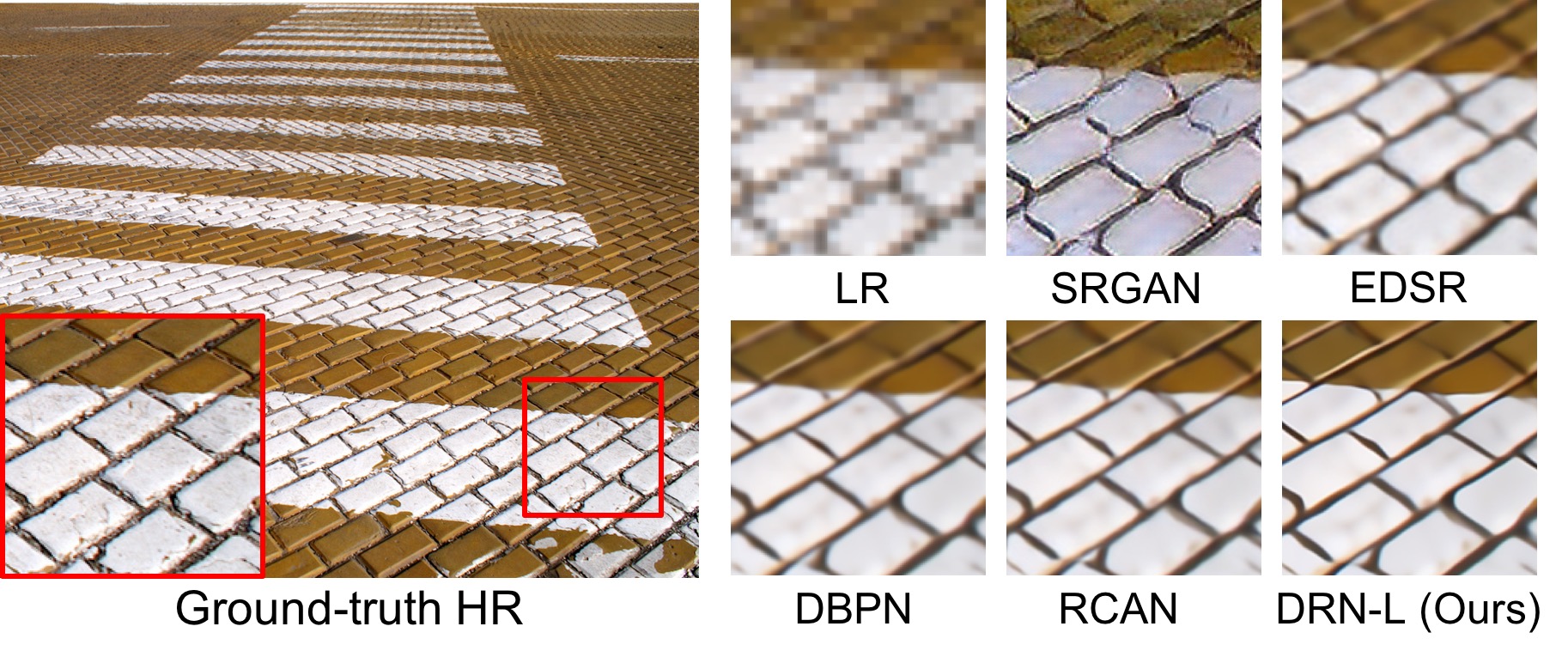}\label{fig:img8x_compare1}
	}	
	\caption{Visual comparison of different methods for (a) $4\times$ and (b) $8\times$ image super-resolution.}
	\label{fig:image_compare}
\end{figure*}

\subsection{Results on Supervised Image Super-Resolution}
In this section, 
we first show an illustrated comparison in terms of performance and model size for $4 \times$ and $8 \times$ SR in Figure~\ref{fig:complexity_main}. Then, we provide a detailed comparison for $4\times$ and $8\times$ SR.
In the experiments, we propose two models, namely a small model DRN-S and a large model DRN-L. 
We obtain the results of all the compared methods from their pretrained models, released code, or their original paper.

\subsubsection{Datasets and Implementation Details}
\vspace{-5 pt}
We compare different methods on five benchmark datasets, including SET5~\cite{DBLP:conf/bmvc/BevilacquaRGA12}, SET14~\cite{zeyde2010single}, BSDS100~\cite{arbelaez2011contour}, URBAN100~\cite{huang2015single} and MANGA109~\cite{matsui2017sketch}. 
Two commonly used image quality metrics are adopted as the metrics, such as \emph{PSNR} and \emph{SSIM}~\cite{wang2004image}.
{Following~\cite{wang2018esrgan}, }
we train our models on {DIV2K~\cite{timofte2017ntire} and Flickr2K~\cite{lim2017enhanced} datasets. 
}

\begin{table*}[t]
	\footnotesize
	\centering
	\caption{Adaptation performance of super-resolution models on images with different degradation methods for $8{\times}$ SR.}
	\resizebox{0.95\textwidth}{!}
	{
		\begin{tabular}{c|c|ccccc}
			\toprule
			\multicolumn{1}{c|}{\multirow{2}[0]{*}{Algorithms}} & \multirow{2}[0]{*}{Degradation} & \multicolumn{1}{c}{Set5} & \multicolumn{1}{c}{Set14} & \multicolumn{1}{c}{BSDS100} & \multicolumn{1}{c}{Urban100} & \multicolumn{1}{c}{Manga109} \\
			&       & PSNR / SSIM & PSNR / SSIM & PSNR / SSIM & PSNR / SSIM & PSNR / SSIM \\
			\hline
			Nearest    & \multirow{6}[0]{*}{Nearest} & 21.22 / 0.560   &  20.11 / 0.485 &  20.64 / 0.471  & 17.76 / 0.454  & 18.51 / 0.594 \\
			EDSR~\cite{lim2017enhanced}  &       & 19.56 / 0.580 & 18.24 / 0.498 & 18.53 / 0.479 & 15.68 / 0.435 & 17.22 / {0.598} \\
			DBPN~\cite{DBPN2018}  &       & 18.80 / 0.541 & 17.36 / 0.461 & 17.94 / 0.456 & 15.07 / 0.400 & 16.67 / 0.550 \\
			RCAN~\cite{zhang2018image}  &       & 18.33 / 0.534 & 17.11 / 0.436 & 17.67 / 0.444 & 14.73 / 0.380 & 16.25 / 0.525 \\
			CinCGAN~\cite{yuan2018unsupervised} &       & 21.76 / 0.648 & 20.64 / 0.552 & 20.89 / 0.528 & 18.21 / 0.505 & 18.86 / \textbf{0.638} \\
			DRN-Adapt  &       & \textbf{23.00} / \textbf{0.715} & \textbf{21.52} / \textbf{0.561} & \textbf{21.98} / \textbf{0.539} & \textbf{19.07} / \textbf{0.518} & \textbf{19.83} / {0.613} \\
			\hline
			EDSR~\cite{lim2017enhanced}  & \multirow{5}[0]{*}{BD}  & 23.54 / 0.702 & 22.13 / 0.594 & 22.71 / 0.567 & 19.70 / 0.551 & 20.64 / 0.700 \\
			DBPN~\cite{DBPN2018}  &       & 23.05 / 0.693 & 21.65 / 0.586 & 22.50 / 0.565 & 19.28 / 0.538 & 20.16 / 0.689 \\
			RCAN~\cite{zhang2018image}  &       & 22.23 / 0.678 & 21.01 / 0.567 & 21.85 / 0.552 & 18.36 / 0.509 & 19.34 / 0.659 \\
			CinCGAN~\cite{yuan2018unsupervised}  &       & {23.39} / {0.682} & {22.14} / {0.581} & {22.73} / {0.554} & {20.36} / {0.538} & {20.29} / {0.670} \\
			DRN-Adapt  &       & \textbf{24.62} / \textbf{0.719} & \textbf{23.07} / \textbf{0.612} & \textbf{23.59} / \textbf{0.583} & \textbf{20.57} / \textbf{0.591} & \textbf{21.52} / \textbf{0.714} \\
			\bottomrule
	\end{tabular}}
	\label{tab:adaptation}%
\end{table*}%

\subsubsection{Comparison with State-of-the-art Methods}
{We compare our method with state-of-the-art SR methods in terms of both quantitative results and visual results.}
For quantitative comparison, we compare the PSNR and SSIM values of different methods for $4\times$ and $8\times$ super-resolution.
From Table~\ref{exp:2x_4x_8xsr}, our DRN-S with about 5M parameters yields {promising performance. Our DRN-L with about 10M parameters yields comparable performance with the considered methods for $4 \times$ SR and yields the best performance for $8 \times$ SR.}
{For quality comparison,}
we provide visual comparisons for our method and the considered methods (See Figure~\ref{fig:image_compare}). 
For both $4 \times$ and $8 \times$ SR, our model consistently produces sharper edges and shapes, while other baselines may give more blurry ones. The results demonstrate the effectiveness of the proposed {dual regression} scheme in generating more accurate and visually promising HR images. 
{More results are put in the supplementary.}

{
We also compare the number of parameters in different models for $4\times$ and $8\times$ SR.
}
{Due to the page limit, we only show the results for $4\times$ SR and put the $8\times$ SR in the supplementary.}
From Figure~\ref{fig:complexity_main}, our DRN-S obtains promising performance with a small number of parameters.
When we increase the number of channels and layers, the larger model DRN-L further improves the performance and obtains the best results.
Both the empirical results and the theoretical analysis in Theorem~\ref{theorem: generalization bound} show the effectiveness of the proposed dual regression scheme for image super-resolution.

\subsection{Adaptation Results on Unpaired Data}\label{sec:semi_results}

In this experiment, we apply the proposed method to a variety of real-world unpaired data.
Different from the supervised setting, 
we first consider a toy case where we evaluate SR models on the LR images with different degradation methods (\eg, Nearest and BD~\cite{zhang2017learning}).
During training, we can only access the LR images but not their corresponding HR images. 
{Then, we also apply our method to LR raw video frames from YouTube.}

\subsubsection{Datasets and Implementation Details}
{In this experiment, we obtain the paired synthetic data by downsampling existing images. Considering the real-world SR applications, all the paired data belong to a different domain from the unpaired data (See more discussions in supplementary).}
Following~\cite{russakovsky2015imagenet}, we randomly choose 3k images from ImageNet (called ImageNet3K) and obtain LR images using different degradation methods, including Nearest and BD. 
We adopt DIV2K {(Bicubic)} as the paired synthetic data\footnote{We can also use other degradation methods to obtain the paired synthetic data. We put the impact of degradation methods in supplementary.} and ImageNet3K LR images with different degradations as the unpaired data.
Note that ImageNet3K HR images are not used in our experiments.
For the SR task on video, we collect 3k raw video frames as the unpaired data to train the models.
In this section, we use our DRN-S model to evaluate the proposed adaptation algorithm and call the resultant model DRN-Adapt.
More details can be found in supplementary.

\begin{figure}[t]
	\centering
		\includegraphics[width = 0.95\columnwidth]{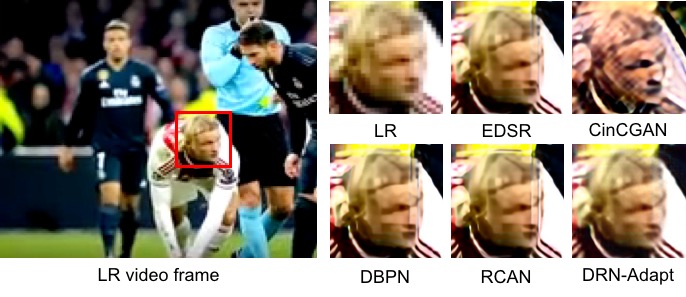}\label{fig:img8x_compare1}
	\caption{Visual comparison of model adaptation to real-world video frames (from YouTube) for $8{\times}$ SR.}
	\vspace{-10pt}
	\label{fig:video}
\end{figure}

\subsubsection{Comparison on Unpaired Synthetic Data}
To evaluate the adaptation performance on unpaired data, we compare our DRN-Adapt and the baseline methods on synthetic data. 
We report the PSRN and SSIM values of different methods for $8\times$ super-resolution in Table~\ref{tab:adaptation}.

From Table~\ref{tab:adaptation},
{DRN-Adapt} consistently outperforms the supervised methods on all the datasets.
For CycleGAN based method, CinCGAN achieves better performance than the supervised learning methods but still cannot surpass our method due to the inherent limitations mentioned before.
Note that, for Nearest LR data, we also report the recovering results of the \emph{Nearest} kernel, which is the same as the degradation method. Our method also yields a large performance improvement over this baseline.
{These results demonstrate the effectiveness of the proposed adaptation algorithm.}

\subsubsection{Comparison on Unpaired Real-world Data}
We apply our method to YouTube raw video frames, which are more challenging owing to the complicated and unknown degradation in real-world scenarios. 
{Since there are no ground-truth HR images, we only provide the visual comparison.}
From Figure~\ref{fig:video}, the generated frames from three supervised baselines (\ie, EDSR, DBPN, and RCAN) contain numerous mosaics. 
For CinCGAN, the SR results are distorted and contain a lot of noise due to the sensitivity to data differences between unpaired LR and HR images. 
By contrast, our DRN-Adapt produces visually promising images with sharper and clearer textures.
{Due to the page limit, we put more visual results in the supplementary.}

\begin{table}[t]
	\centering
	\caption{The impact of the proposed dual regression scheme on super-resolution performance in terms of PSNR score on the five benchmark datasets for 4 $\times$ SR.}
	\resizebox{0.47\textwidth}{!}{	
		\begin{tabular}{c|c|c|c|c|c|c}
			\toprule
			Model & Dual & Set5  & Set14 & BSDS100 & Urban100 & Manga109  \\
			\hline
			\multirow{2}[0]{*}{DRN-S} & \xmark & 32.53 & 28.76 & 27.68 & 26.54 & 31.21 \\
			 & $\checkmark$ & \textbf{32.68} & \textbf{28.93} & \textbf{27.78} & \textbf{26.84} & \textbf{31.52} \\
			\hline
			\multirow{2}[0]{*}{DRN-L} & \xmark & 32.61 & 28.84 & 27.72 & 26.77 & 31.39 \\
			 & $\checkmark$ & \textbf{32.74} & \textbf{28.98} & \textbf{27.83} & \textbf{27.03} & \textbf{31.73} \\
			\bottomrule
		\end{tabular}%
	}
	\label{exp:dual}%
\end{table}%

\section{Further Experiments}

\begin{table}[t]
	\centering
	\caption{Effect of the hyper-parameter $\lambda$ in Eqn.~(\ref{eq:dual_regression}) on the performance of DRN-S (testing on Set5) for 4 $\times$ SR.}
	\resizebox{0.47\textwidth}{!}
	{
		\begin{tabular}{c|ccccc}
			\toprule
			\multirow{1}[0]{*}{$\lambda$}   & 0.001 & 0.01 & 0.1 & 1.0 & 10  \\
			\hline
			PSNR on Set5  & 32.57 & 32.61 & \textbf{32.67} & 32.51 & 32.37 \\
			\bottomrule
		\end{tabular}%
	}
	\label{tab:impact_lambda}%
\end{table}%

\subsection{Ablation Study on Dual Regression Scheme} \label{sec:ablation}

We conduct an ablation study on the dual regression scheme and report the results for $4\times$ SR in Table~\ref{exp:dual}.
Compared to the baselines, the models equipped with the dual regression scheme yield better performance on all the datasets.
{These results suggest that the dual regression scheme can improve the reconstruction of HR images by introducing an additional constraint to reduce the space of the mapping function.}
We also evaluate the impact of our dual regression scheme on other models, \eg, SRResNet~\cite{ledig2016photo} based network (See more details in the supplementary).

\subsection{Effect of $\lambda$ on DRN}\label{exp:lambda}

{We conduct an experiment to investigate the impact of the hyper-parameter $\lambda$ in Eqn.~(\ref{eq:dual_regression}). From Table~\ref{tab:impact_lambda}, when we increase $\lambda$ from 0.001 to 0.1, the dual regression loss gradually becomes more important and provides powerful supervision. If we further increase $\lambda$ to 1 or 10, the dual regression loss term would overwhelm the original primal regression loss and hamper the final performance. To obtain a good tradeoff between the primal and dual regression, we set $\lambda=0.1$ in practice.
}

\begin{figure}[t]
    \centering
		\includegraphics[width = 1\columnwidth]{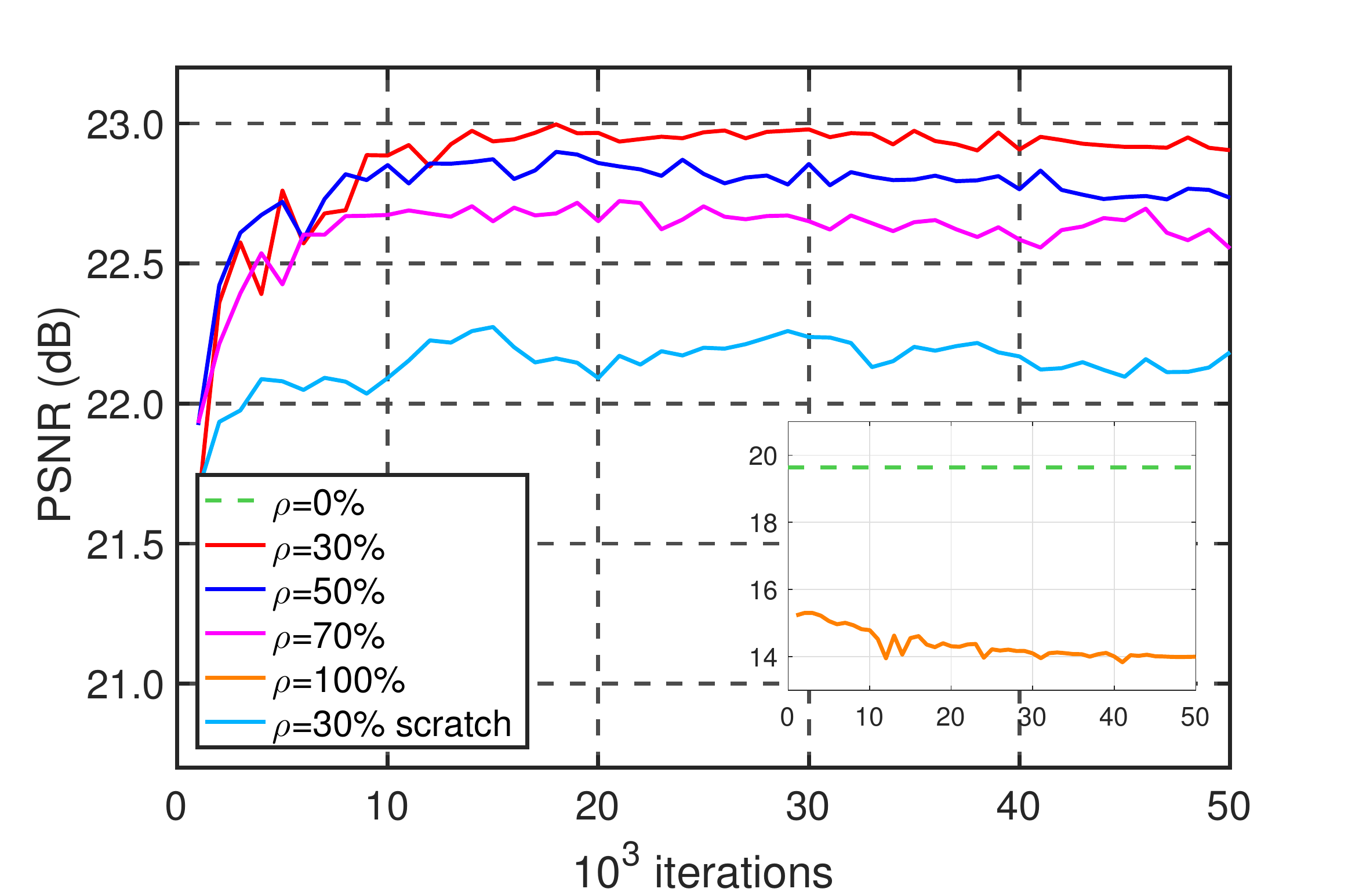}
    \caption{Comparisons of the performance on unpaired data with Nearest degradation (testing on Set5) for 4 $\times$ SR.
    }
    \label{fig:adaptation_curve}
    \vspace{-10 pt}
\end{figure}

\subsection{Effect of $\rho$ on Adaptation Algorithm} \label{exp:amount}

{We investigate the effect of $\rho$ on the proposed adaptation algorithm.}
{
We compare the performance when we change the data ratio of unpaired data $\rho$ and show the corresponding training curves in Figure~\ref{fig:adaptation_curve}.}
From Figure~\ref{fig:adaptation_curve}, when we set $\rho \in \{30\%, 50\%, 70\%\}$, the resultant models obtain better performance than the baseline model, \ie, with $\rho{=}0\%$. 
In practice, we set $\rho{=}30\%$ to obtain the best performance.
We also compare the models with and without the pretrained parameters. From Figure~\ref{fig:adaptation_curve}, the model trained from scratch yields slightly worse result but still outperforms the baseline model without adaptation. These results demonstrate the effectiveness of the proposed adaptation algorithm.

\section{Conclusion}
In this paper, we have proposed a novel dual regression scheme for paired and unpaired data.
On the paired data, we introduce an additional constraint by reconstructing LR images to reduce the space of possible functions. Thus, we can significantly improve the performance of SR models.
Furthermore, we also focus on the unpaired data and apply the dual regression scheme to real-world data., \eg, raw video frames from YouTube.
Extensive experiments on both paired and unpaired data demonstrate the superiority of our method over the considered baseline methods.

\section*{Acknowledgments}
This work was partially supported by Guangdong Provincial Scientific and Technological Funds under Grants 2018B010107001, key project of NSFC 61836003, Fundamental Research Funds for the Central Universities D2191240, Program for Guangdong Introducing Innovative and Enterpreneurial Teams 2017ZT07X183, Tencent AI Lab Rhino-Bird Focused Research Program JR201902, Guangdong Special Branch Plans Young Talent with Scientific and Technological Innovation 2016TQ03X445, Guangzhou Science and Technology Planning Project 201904010197, 
Natural Science Foundation of Guangdong Province 2016A030313437,
and Microsoft Research Asia (MSRA Collaborative Research Program). 

{
	\small
	\bibliographystyle{ieee_fullname}
	\bibliography{bib_long}
}

{

\clearpage
\onecolumn
\appendix

\begin{center}
	{
		\Large{\textbf{Supplementary Materials for ``\mytitle''}}
	}
\end{center}

We organize our supplementary materials as follows. First, we provide the derivation of generalization error bound for the dual regression scheme in Section~\ref{sec:proof}. Second, we provide more details on the architecture of the proposed DRN model in Section~\ref{sec:Model_details}. Third, we provide more implementation details on the training method for the SR tasks with paired data and unpaired data in Section~\ref{sec:implementation_details_sup}. Fourth, we conduct more ablation studies on the proposed dual regression scheme in Section~\ref{sec:ablatioon_sup}. Last, we report more visual comparison results in Section~\ref{sec:more_results_sup}.

\section{Theoretical Analysis}\label{sec:proof}

In this section, we will analyze the generalization bound for the proposed method.
The generalization error of the dual learning scheme is to measure how accurately the algorithm predicts for the unseen test data in the primal and dual tasks. Firstly, we will introduce the definition of the generalization error as follows:
\begin{deftn}
	Given an underlying distribution $ \mS $ and hypotheses $ P \in \mP $ and $ D \in \mD $ for the primal and dual tasks, where $ \mP = \{ P_{\theta_{\bx\by}}(\bx); \theta_{\bx\by} \in \Theta_{\bx\by} \} $ and $ \mD = \{ D_{\theta_{\by\bx}}(\by); \theta_{\by\bx} \in \Theta_{\by\bx} \}$, and $ \Theta_{\bx\by} $ and $ \Theta_{\by\bx} $ are parameter spaces, respectively, the generalization error (expected loss)
	is defined by:
	\begin{align*}
	E(P, D) = \mmE_{(\bx, \by) \sim \mP} \left[ \mL_P (P(\bx), \by) + {\lambda} \mL_D (D(P(\bx)), \bx) \right], \; \forall P \in \mP, D \in \mD.
	\end{align*}
\end{deftn}

In practice, the goal of the dual learning is to optimize the bi-directional tasks. For any $ P \in \mP $ and $ D \in \mD $, we define the empirical loss on the $ N $ samples as follows:
\begin{align}
\hat{E}(P, D) = \frac{1}{N} \sum_{i=1}^{N} \mL_P (P(\bx_i), \by_i) + {\lambda} \mL_D (D(P(\bx_i)), \bx_i)
\end{align}
Following \cite{mohri2012foundations}, we define Rademacher complexity for dual learning in this paper.
We define the function space as $ \mH_{dual} \in \mP \times \mD $, this Rademacher complexity can measure the complexity of the function space, that is it can capture the richness of a family of the primal and the dual models.
For our application, we mildly rewrite the definition of Rademacher complexity in  \cite{mohri2012foundations} as follows:
\begin{deftn} \textbf{\emph{(Rademacher complexity of dual learning) }} \label{Def: Rademacher Complexity}
	Given an underlying distribution $ \mS $, and its empirical distribution $ \mZ = \{\bz_1, \bz_2, \cdots,\bz_N\} $, where $ \bz_i = (\bx_i, \by_i) $, then the Rademacher complexity of dual learning is defined as:
	\begin{align*}
	R_N^{DL} (\mH_{dual}) = \mmE_{\mZ} \left[ \hat{R}_{\mZ} (P, D) \right], \; \forall P \in \mP, D \in \mD,
	\end{align*}
	where $ \hat{R}_{\mZ} (P, D) $ is its empirical Rademacher complexity defined as:
	\begin{align*}
	\hat{R}_{\mZ} (P, D) = \mmE_{\sigma} \left[ \sup_{(P, D) \in \mH_{dual}}  \frac{1}{N} \sum_{i=1}^{N} \sigma_i (\mL_P (P(\bx_i), \by_i) + {\lambda} \mL_D (D(P(\bx_i)), \bx_i))  \right].
	\end{align*}
	where $ \sigma = \{ \sigma_1, \sigma_2, \cdots, \sigma_N \} $ are independent uniform $ \{\pm 1\} $-valued random variables with $ p(\sigma_i = 1) = p(\sigma_i = -1) = \frac{1}{2} $.
\end{deftn}

\paragraph{Generalization bound.}
Here, we analyze the generalization bound for the proposed dual regression scheme. We first start with a simple case of finite function space. Then, we generalize it to a more general case with infinite function space.
\begin{thm}
	Let $ \mL_P (P(\bx), \by) + {\lambda} \mL_D (D(P(\bx)), \bx) $ be a mapping from $ \mX \times \mY $ to $ [0, C] $, and suppose the function space $ \mH_{dual} $ is finite, then for any $ \delta > 0 $, with probability at least $ 1-\delta $, the following inequality holds for all $ (P, D) \in \mH_{dual} $:
	\begin{align*}
	E(P, D) \leq \hat{E}(P, D) + C \sqrt{\frac{\log|\mH_{dual}| + \log \frac{1}{\delta}}{2N}}.
	\end{align*}
\end{thm}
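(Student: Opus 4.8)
The plan is to prove this finite-hypothesis-class generalization bound by the standard two-step route: a uniform convergence argument via a union bound over $\mH_{dual}$, combined with a concentration inequality (Hoeffding) for each fixed $(P,D)$. First I would fix an arbitrary pair $(P,D) \in \mH_{dual}$ and define the per-sample loss $g_{P,D}(\bx,\by) = \mL_P(P(\bx),\by) + \lambda \mL_D(D(P(\bx)),\bx)$, which by hypothesis takes values in $[0,C]$. Since the $N$ samples $\{(\bx_i,\by_i)\}$ are drawn i.i.d.\ from the underlying distribution, the quantities $g_{P,D}(\bx_i,\by_i)$ are i.i.d.\ bounded random variables with mean $E(P,D)$, and their empirical average is exactly $\hat{E}(P,D)$. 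Hoeffding's inequality then gives, for any $t>0$,
\begin{align*}
\Pr\Big( E(P,D) - \hat{E}(P,D) > t \Big) \leq \exp\!\left( -\frac{2Nt^2}{C^2} \right).
\end{align*}

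Next I would apply the union bound over all $|\mH_{dual}|$ pairs in the (finite) hypothesis space: the probability that \emph{some} $(P,D)$ violates the one-sided deviation bound by more than $t$ is at most $|\mH_{dual}| \exp(-2Nt^2/C^2)$. Setting this equal to $\delta$ and solving for $t$ yields $t = C\sqrt{\frac{\log|\mH_{dual}| + \log(1/\delta)}{2N}}$. Hence with probability at least $1-\delta$, simultaneously for all $(P,D) \in \mH_{dual}$,
\begin{align*}
E(P,D) \leq \hat{E}(P,D) + C\sqrt{\frac{\log|\mH_{dual}| + \log\frac{1}{\delta}}{2N}},
\end{align*}
which is the claimed inequality.

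There is no serious obstacle here — the argument is essentially textbook (cf.\ \cite{mohri2012foundations}). The only points that need care are (i) checking that the boundedness assumption on $g_{P,D}$ is precisely what licenses Hoeffding with range parameter $C$ (the summand range is $C$, so the sub-Gaussian variance proxy per term is $C^2/4$, giving the $2Nt^2/C^2$ exponent), and (ii) being explicit that we only need the \emph{one-sided} bound, so no factor of $2$ appears inside the logarithm. I would also note that this finite-class result is the stepping stone to the infinite-class version in Theorem~\ref{theorem: generalization bound}, where the $\log|\mH_{dual}|$ term is replaced by the empirical Rademacher complexity $\hat{R}_{\mZ}^{DL}(\mH_{dual})$ via a symmetrization argument and McDiarmid's inequality; the finite case is included mainly to build intuition before that generalization.
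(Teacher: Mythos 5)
Your proof is correct and follows essentially the same route as the paper's: Hoeffding's inequality for each fixed $(P,D)$ using the $[0,C]$ bound on the combined loss, a union bound over the finite class $\mH_{dual}$, and then setting $|\mH_{dual}|e^{-2N\epsilon^2/C^2}=\delta$ and solving for $\epsilon$. Your extra remarks (the one-sided tail avoiding a factor of $2$ in the logarithm, the i.i.d.\ structure of the per-sample losses) only make explicit what the paper leaves implicit.
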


\begin{proof}
	Based on Hoeffding's inequality, since $ \mL_P (P(\bx), \by) + {\lambda}  \mL_D (D(P(\bx)), \bx) $ is bounded in $ [0, C] $, for any $ (P, D) \in \mH_{dual} $, then
	\begin{align*}
	P\left[ E(P, D) - \hat{E}(P, D) > \epsilon \right] \leq e^{-\frac{2N\epsilon^2}{C^2}}
	\end{align*}
	Based on the union bound, we have
	\begin{align*}
	&P\left[ \exists (P, D) \in \mH_{dual}: E(P, D) - \hat{E}(P, D) > \epsilon \right] \\
	\leq& \sum_{(P, D) \in \mH_{dual}} P \left[ E(P, D) - \hat{E}(P, D) > \epsilon \right] \\
	\leq& |\mH_{dual}| e^{-\frac{2N\epsilon^2}{C^2}}.
	\end{align*}
	Let $ |\mH_{dual}| e^{-\frac{2N\epsilon^2}{C^2}} = \delta $, we have $ \epsilon = C \sqrt{\frac{\log|\mH_{dual}| + \log \frac{1}{\delta}}{2N}} $ and conclude the theorem.
\end{proof}
This theorem shows that a larger sample size $ N $ and smaller function space can guarantee the generalization.
Next, we will give a generalization bound of a general case of an infinite function space using Rademacher complexity.	

\begin{thm} \label{theorem: generalization_bound_sup}
	Let $ \mL_P(P(\bx), \by) {+} {\lambda}\mL_D (D(P(\bx)), \bx) $ be a mapping from $ \mX {\times} \mY $ to $ [0, C] $ with the upper bound $C$, and the function space $ \mH_{dual} $ be infinite. Then, for any $ \delta {>} 0 $, with probability at least $ 1{-}\delta $, the generalization error $E(P, D)$ (\ie, expected loss)  satisfies for all $ (P, D) {\in} \mH_{dual} $:
	\begin{align}\label{ineqn:E_PD_bound}
	E(P, D) \leq \hat{E}(P, D) {+} 2 \hat{R}_{\mZ}^{DL}(\mH_{dual}) {+} 3C \sqrt{\frac{1}{2N} \log\left(\frac{1}{\delta}\right)},
	\end{align}
	where $N$ is the number of samples and $\hat{R}_{\mZ}^{DL}$ is the empirical Rademacher complexity of dual learning. 
	Let $\mB(P, D)$ be the generalization bound of the dual regression SR, \ie $\mB(P, D){=}2\hat{R}_{\mZ}^{DL}(\mH_{dual}) {+} 3C \sqrt{\frac{1}{2N} \log\left(\frac{1}{\delta}\right)}$, we have
	\begin{align}\label{ineqn:B_PD_bound}
	\mB(P, D) \leq \mB(P),
	\end{align}
	where $\mB(P), P{\in}\mH$ is the generalization bound of standard supervised learning \textit{w.r.t.} the Rademacher complexity $\hat{R}_{\mZ}^{SL}(\mH)$.
\end{thm}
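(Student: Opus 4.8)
The plan is to establish the two displayed inequalities \eqref{ineqn:E_PD_bound} and \eqref{ineqn:B_PD_bound} in turn. Inequality \eqref{ineqn:E_PD_bound} is a uniform-convergence bound of the classical Rademacher-complexity type (following \cite{mohri2012foundations}); inequality \eqref{ineqn:B_PD_bound} reduces, once the shared $3C\sqrt{\cdots}$ terms are cancelled, to monotonicity of Rademacher complexity under set inclusion, i.e.\ to Remark~\ref{alg:remark1}.

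For \eqref{ineqn:E_PD_bound}, I would first define $\Phi(\mZ) = \sup_{(P,D)\in\mH_{dual}}\big(E(P,D)-\hat{E}(P,D)\big)$. Since the composite loss $\mL_P(P(\bx),\by)+\lambda\mL_D(D(P(\bx)),\bx)$ takes values in $[0,C]$, replacing a single sample $\bz_i$ perturbs $\hat{E}$, and hence $\Phi$, by at most $C/N$; McDiarmid's bounded-differences inequality then gives, with probability at least $1-\delta/2$, that $\Phi(\mZ)\le \mmE_{\mZ}[\Phi]+C\sqrt{\tfrac{1}{2N}\log(2/\delta)}$. Next I would bound $\mmE_{\mZ}[\Phi]$ by the standard symmetrization argument (ghost sample plus Rademacher signs $\sigma_i$), which yields $\mmE_{\mZ}[\Phi]\le 2\,\mmE_{\mZ}\big[\hat{R}^{DL}_{\mZ}(\mH_{dual})\big]$, with $\hat{R}^{DL}_{\mZ}$ exactly the empirical Rademacher complexity of Definition~\ref{Def: Rademacher Complexity}. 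A second application of McDiarmid to $\hat{R}^{DL}_{\mZ}$ (again bounded differences $C/N$) replaces $\mmE_{\mZ}[\hat{R}^{DL}_{\mZ}]$ by $\hat{R}^{DL}_{\mZ}(\mH_{dual})$ at the cost of a further $2C\sqrt{\tfrac{1}{2N}\log(2/\delta)}$; combining the two deviations via a union bound produces the $3C\sqrt{\cdots}$ term (the $\log 2$ being absorbed into constants as in the statement). None of these steps uses anything beyond boundedness of the composite loss, so they carry over verbatim from the supervised case and in fact extend it from $[0,1]$-valued losses to $[0,C]$-valued ones, which is the sense in which the bound is more general than \cite{pmlr-v70-xia17a}.

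For \eqref{ineqn:B_PD_bound}, I observe that $\mB(P,D)$ and $\mB(P)$ share the identical tail term $3C\sqrt{\tfrac{1}{2N}\log(1/\delta)}$: the same $C$ upper-bounds the pure primal loss because $0\le\mL_P\le\mL_P+\lambda\mL_D\le C$, and $N,\delta$ are unchanged. Hence $\mB(P,D)\le\mB(P)$ is equivalent to $\hat{R}^{DL}_{\mZ}(\mH_{dual})\le\hat{R}^{SL}_{\mZ}(\mH)$. I would prove this by exhibiting the closed-loop constraint as a genuine restriction of the hypothesis set: a pair $(P,D)$ is admissible in the dual regression scheme only when the reconstruction $D(P(\bx))$ stays close to the input $\bx$, so the set of primal models that actually occur as first coordinates of elements of $\mH_{dual}$ is contained in the unconstrained class $\mH$. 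Since empirical Rademacher complexity is monotone over nested function classes and the extra $\lambda\mL_D$ contribution is subsumed in the common bound $C$, we get $\hat{R}^{DL}_{\mZ}(\mH_{dual})\le\hat{R}^{SL}_{\mZ}(\mH)$, which is precisely Remark~\ref{alg:remark1} and closes the proof.

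I expect the last step to be the main obstacle. Part \eqref{ineqn:E_PD_bound} is essentially textbook, but \eqref{ineqn:B_PD_bound} hinges on making precise the sense in which $\mH_{dual}$ is ``smaller'' than $\mH$: naively the dual objective is a \emph{sum} of two nonnegative terms, so an unconstrained product space $\mP\times\mD$ could have \emph{larger} Rademacher complexity, not smaller. The argument only works if $\mH_{dual}$ is taken to be the constrained set on which the cycle $D\circ P$ approximately recovers the input, so that the shrinkage of the admissible primal component dominates the bounded contribution of the dual term. Pinning down this constraint — and checking that symmetrization and the bounded-difference estimates are unaffected by the composite structure of the loss (they are, since only the range $[0,C]$ is used) — is the delicate part of the argument.
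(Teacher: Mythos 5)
Your treatment of \eqref{ineqn:E_PD_bound} is exactly the paper's: the authors simply invoke Theorem~3.1 of \cite{mohri2012foundations} rescaled from $[0,1]$-valued to $[0,C]$-valued losses, and your McDiarmid-plus-symmetrization derivation is precisely the proof of that cited theorem, so nothing differs there. For \eqref{ineqn:B_PD_bound}, the paper's entire argument is the one-line assertion that $\hat{R}_{\mZ}^{DL}(\mH_{dual})\leq\hat{R}_{\mZ}^{SL}(\mH)$ ``because the capacity of the function space $\mH_{dual}\in\mP\times\mD$ is smaller than the capacity of the function space $\mH\in\mP$''; it never engages with the difficulty you raise.

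You are right that this is the delicate step, and your own attempt to close it does not succeed either. Monotonicity of Rademacher complexity under inclusion applies to nested classes of the \emph{same} real-valued functions, whereas $\hat{R}_{\mZ}^{DL}$ is (by Definition~\ref{Def: Rademacher Complexity}) the complexity of the class of composite losses $\big\{\mL_P(P(\cdot),\cdot)+\lambda\,\mL_D(D(P(\cdot)),\cdot)\big\}$, which is not contained in $\big\{\mL_P(P(\cdot),\cdot):P\in\mH\big\}$ no matter how the admissible primal components are restricted. By sub-additivity one only obtains $\hat{R}_{\mZ}^{DL}\leq\hat{R}_{\mZ}^{SL}(\mH)+\lambda\,\hat{R}_{\mZ}\big(\mL_D\circ D\circ P\big)$, and the second term is nonnegative; it is not ``subsumed in the common bound $C$,'' since $C$ enters only the concentration term of $\mB(P,D)$, not its complexity term. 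So the step you flag as the main obstacle is a genuine gap in your proposal --- but it is equally a gap in the paper, which covers it with the same unproved capacity claim restated as Remark~\ref{alg:remark1}. In short: same route as the paper throughout; the first inequality is fine; the second is not rigorously established by either you or the authors, and you deserve credit for saying so explicitly.
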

\begin{proof}
	Based on Theorem 3.1 in \cite{mohri2012foundations}, we extend a case for $ \mL_P (P(\bx), \by) + {\lambda}  \mL_D (D(P(\bx)), \bx) $ bounded in $ [0, C] $, and we have the generalization bound in (\ref{ineqn:E_PD_bound}). 
	According to the definition of Rademacher complexity, we have $\hat{R}_{\mZ}^{DL}(\mH_{dual}) {\leq} \hat{R}_{\mZ}^{SL}(\mH) $ because the capacity of the function space $ \mH_{dual} \small{\in} \mP \small{\times} \mD $ is smaller than the capacity of the function space $ \mH \small{\in} \mP $.
	With the same number of samples, we have $\mB(P, D) {\leq} \mB(P)$.
\end{proof}
Theorem \ref{theorem: generalization_bound_sup} shows that with probability at least $ 1-\delta $, the generalization error is smaller than $ 2 R_N^{DL} + C \sqrt{\frac{1}{2N} \log(\frac{1}{\delta})} $ or $ 2 \hat{R}_{\mZ}^{DL} + 3C \sqrt{\frac{1}{2N} \log(\frac{1}{\delta})} $.
It suggests that using the function space with larger capacity and more samples can guarantee better generalization.
Moreover, the generalization bound of dual learning is more general for the case that the loss function $ \mL_P (P(\bx), \by) + {\lambda} \mL_D (D(P(\bx)), \bx) $ is bounded by $ [0, C] $, which is different from \cite{pmlr-v70-xia17a}.

\begin{remark}
	Based on the definition of Rademacher complexity, the capacity of the function space $ \mH_{dual} \small{\in} \mP \small{\times} \mD $ is smaller than the capacity of the function space $ \mH \small{\in} \mP $ or $ \mH \small{\in} \mD $ in traditional supervised learning, \ie, $ \hat{R}_{\mZ}^{DL} \leq \hat{R}_{\mZ}^{SL} $, where $ \hat{R}_{\mZ}^{SL} $ is Rademacher complexity defined in supervised learning. In other words, dual learning has a smaller generalization bound than supervised learning and the proposed dual regression model helps the primal model to achieve more accurate SR predictions.
\end{remark}

\section{Model Details of Dual Regression Network} \label{sec:Model_details}
Deep neural networks (DNNs) have achieved great success in image classification~\cite{guo2018double,guo2020multi,guo2016shallow,guo2019nat}, image generation~\cite{guo2019auto,cao2018adversarial}, and image restoration~\cite{guo2018dual,guo2020hierarchical}. In this paper, we propose a novel Dual Regression Network (DRN), which contains a primal model and a dual model.
{Specifically, the primal model contains 2 basic blocks for $4\times$ SR and 3 blocks for $8\times$ SR. {To form a closed-loop, according to the architecture design of the primal model}, there are 2 dual models for $4\times$ SR and 3 dual models for $8\times$ SR, respectively.}

{
	Let $B$ be the number of RCABs~\cite{zhang2018image} and $F$ be the number of base feature channels.
	For $4\times$ SR, we set $B=30$ and $F=16$ for DRN-S and $B=40$ and $F=20$ for DRN-L. 
	For $8\times$ SR, we set $B=30$ and $F=8$ for DRN-S and $B=36$ and $F=10$ for DRN-L. 
	Moreover, we set the reduction ratio $r=16$ in all RCABs for our DRN model and set the negative slope to 0.2 for all LeakyReLU in DRN.}
We show the detailed architecture of the $8\times$ DRN model in Table~\ref{tab:model_details}. {To obtain the $4\times$ model, one can simply remove one basic block from the $8\times$ model.}

{
	As shown in Table~\ref{tab:model_details}, we use Conv(1,1) and Conv(3,3) to represent the convolution layer with the kernel size of $1 \times 1$ and $3 \times 3$, respectively. We use Conv$_{s2}$ to represent the convolution layer with the stride of $2$.
	Following the settings of EDSR~\cite{lim2017enhanced}, we build the Upsampler with one convolution layer and one pixel-shuffle~\cite{shi2016real} layer to upscale the feature maps.}
Moreover, we use h and w to represent the height and width of the input LR images. Thus, the shape of output images should be 8h $\times$ 8w for the $8\times$ model. 

\begin{table*}[h]
	\renewcommand\thetable{A}
	\centering
	\caption{Detailed model design of the proposed {$8\times$} {DRN}.}
	{
		\begin{tabular}{c|c|c|c}
			\toprule
			Module &  Module details & Input shape    & Output shape \\
			\hline
			Head  &  Conv(3,3)  & (3, 8h, 8w) & (1$F$, 8h, 8w) \\
			\hline
			Down 1  & Conv$_{s2}$-LeakyReLU-Conv & (1$F$, 8h, 8w) & (2$F$, 4h, 4w) \\
			\hline
			Down 2 & Conv$_{s2}$-LeakyReLU-Conv & (2$F$, 4h, 4w)  & (4$F$, 2h, 2w) \\
			\hline
			Down 3 & Conv$_{s2}$-LeakyReLU-Conv & (4$F$, 2h, 2w) &  (8$F$, 1h, 1w) \\
			\hline
			\multirow{3}{*}{Up 1} & $B$ RCABs &  (8$F$, 1h, 1w) & (8$F$, 1h, 1w) \\
			\cline{2-4}    \multicolumn{1}{c|}{} & $2\times$ Upsampler &  (8$F$, 1h, 1w) & (8$F$, 2h, 2w) \\
			\cline{2-4}    \multicolumn{1}{c|}{} & Conv(1,1) & (8$F$, 2h, 2w)  & (4$F$, 2h, 2w) \\
			\hline
			Concatenation 1 &  Concatenation of the output of Up 1 and Down 2 & (4$F$, 2h, 2w) $\oplus$ (4$F$, 2h, 2w) & (8$F$, 2h, 2w) \\
			\hline
			\multirow{3}{*}{Up 2} & $B$ RCABs &  (8$F$, 2h, 2w) & (8$F$, 2h, 2w) \\
			\cline{2-4}    \multicolumn{1}{c|}{} & $2\times$ Upsampler & (8$F$, 2h, 2w)  & (8$F$, 4h, 4w) \\
			\cline{2-4}    \multicolumn{1}{c|}{} & Conv(1,1) & (8$F$, 4h, 4w)  & (2$F$, 4h, 4w) \\
			\hline
			Concatenation 2  & Concatenation of the output of Up 2 and Down 1 & (2$F$, 4h, 4w) $\oplus$ (2$F$, 4h, 4w)  & (4$F$, 4h, 4w) \\
			\hline
			\multirow{3}{*}{Up 3} & $B$ RCABs  & (4$F$, 4h, 4w) & (4$F$, 4h, 4w) \\
			\cline{2-4}    \multicolumn{1}{c|}{}  & $2\times$ Upsampler & (4$F$, 4h, 4w) & (4$F$, 8h, 8w) \\
			\cline{2-4}    \multicolumn{1}{c|}{}  & Conv(1,1) & (4$F$, 8h, 8w) & (1$F$, 8h, 8w) \\
			\hline
			Concatenation 3  & Concatenation of the output of Up3 and Head & (1$F$, 8h, 8w) $\oplus$ (1$F$, 8h, 8w)  & (2$F$, 8h, 8w) \\
			\hline
			Tail 0  & Conv(3,3) & (8$F$, 1h, 1w)  & (3, 1h, 1w) \\
			\hline
			Tail 1  & Conv(3,3) & (8$F$, 2h, 2w)  & (3, 2h, 2w) \\
			\hline
			Tail 2  & Conv(3,3) & (4$F$, 4h, 4w)  & (3, 4h, 4w) \\
			\hline
			Tail 3  & Conv(3,3)  & (2$F$, 8h, 8w) & (3, 8h, 8w) \\
			\hline
			Dual 1 & Conv$_{s2}$-LeakyReLU-Conv & (3, 8h, 8w)  & (3, 4h, 4w) \\
			\hline
			Dual 2 & Conv$_{s2}$-LeakyReLU-Conv & (3, 4h, 4w)  & (3, 2h, 2w) \\
			\hline
			Dual 3 & Conv$_{s2}$-LeakyReLU-Conv & (3, 2h, 2w)  & (3, 1h, 1w) \\
			\bottomrule
		\end{tabular}%
	}
	\label{tab:model_details}%
\end{table*}%

\section{More Implementation Details} \label{sec:implementation_details_sup}

\subsection{Supervised Image Super-Resolution}
\noindent \textbf{Training data.}
{Following~\cite{wang2018esrgan}, }
we train our model on {DIV2K~\cite{timofte2017ntire} and Flickr2K~\cite{lim2017enhanced} datasets, which contain $800$ and $2650$ training images separately.
	We use the RGB input patches of size $48 \times 48$ from LR images and the corresponding HR patches as the paired training data, and augment the training data following the method in~\cite{lim2017enhanced, zhang2018image}.}

\noindent \textbf{Test data.}
{For quantitative comparison on paired data, we evaluate different SR models using five benchmark datasets, including SET5~\cite{DBLP:conf/bmvc/BevilacquaRGA12}, SET14~\cite{zeyde2010single}, BSDS100~\cite{arbelaez2011contour}, URBAN100~\cite{huang2015single} and MANGA109~\cite{matsui2017sketch}.}

\noindent \textbf{Implementation details.}
{For training, we apply Adam with $\beta_1 = 0.9$, $\beta_2 = 0.99$ and set minibatch size as 32.
	The learning rate is initialized to $10^{-4}$ and decreased to $10^{-7}$ with a cosine annealing out of $10^6$ iterations in total.}

\subsection{Adaptation to Real-world Scenarios with Unpaired Data}
\noindent \textbf{Training data.}
{To obtain the unpaired synthetic data, we randomly choose 3k images from ImageNet~\cite{russakovsky2015imagenet} (called ImageNet3k) and obtain the LR images using different degradation methods, including Nearest and BD.}
More specifically, we use Matlab to obtain the Nearest data.
The BD data is obtained using the Gaussian kernel with size $7\times7$ and a standard deviation of $1.6$. Note that ImageNet3K HR images are not used in our experiments.
Moreover, we collect 3k LR raw video frames from YouTube as the unpaired real-world data to evaluate the proposed DRN in a more general and challenging case. 
{More critically, we use both paired data (DIV2K~\cite{timofte2017ntire}) and unpaired data to train the proposed models.}

\noindent \textbf{Test data.}
{For quantitative comparison on unpaired synthetic data, we obtain the LR images of five benchmark datasets using Nearest and BD degradation methods separately.}

\noindent \textbf{Implementation details.}
{We train a DRN-Adapt model for each kind of unpaired data, \ie, Nearest data, BD data, and video frames collected from YouTube. Thus, there are 3 DRN-adapt models in total. 
	And We also train a CinCGAN~\cite{zhu2017unpaired} model for each kind of unpaired data for comparison.
}
{
	Based on pretrained DRN-S, We train our DRN-Adapt models with a learning rate of $10^{-4}$ and the data ratio of unpaired data $\rho=30\%$ for a total of $ 10^5 $ iterations. 
	Moreover, we apply Adam with $\beta_1 = 0.9$, $\beta_2 = 0.99$ to optimize the models, and set minibatch size as 16.
} 

\section{More Ablation Studies on Dual Regression} \label{sec:ablatioon_sup}

{In this section, we first provide an additional ablation study of the dual regression scheme on other architectures. Then, we investigate the effect of the dual regression scheme on HR images. Last, we investigate the impact of different degradation methods to obtain paired synthetic data.
}

\subsection{Effect of Dual Regression Scheme on Other Architectures} \label{sec:dual_ablatioon_sup}
{To verify the impact of the dual regression scheme, we also conduct an ablation study of the dual network for SRResNet (see architecture in Figure~\ref{fig:srresnet_dual_sup}).
	``SRResNet + Dual'' denotes the baseline SRResNet equipped with the dual regression scheme. 
	From Table~\ref{exp:dual_srresnet}, the model with the dual regression scheme consistently outperforms the baseline counterpart, which {further} demonstrates the effectiveness of our method.}

\begin{table}[h]
	\renewcommand\thetable{B}
	\small
	\centering
	\caption{The impact of the proposed dual regression scheme on the SRResNet model in terms of PSNR score on the five benchmark datasets for $4\times$ {SR}.}
	\begin{tabular}{c|c|c|c|c|c}
		\toprule
		Method & Set5  & Set14 & BSDS100 & Urban100 & Manga109  \\
		\hline
		SRResNet & 32.26 & 28.53 & 27.61 & 26.24 & 31.03 \\
		SRResNet + Dual & \textbf{32.47} & \textbf{28.77} & \textbf{27.70} & \textbf{26.58} & \textbf{31.24} \\
		\bottomrule
	\end{tabular}%
	\label{exp:dual_srresnet}%
\end{table}%

\begin{figure}[h]
	\renewcommand\thefigure{A}
	\centering
	\includegraphics[width = 0.9\columnwidth]{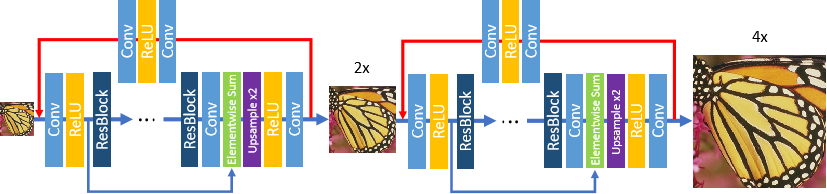}
	\caption{The SRResNet architecture equipped with the proposed dual regression scheme for $4\times$ SR.}
	\label{fig:srresnet_dual_sup}
\end{figure}

\subsection{Effect of the Dual Regression on HR Data} \label{sec:dual_hr_sup}
{As mentioned in Section 3.1, one can also add a dual regression constraint on the HR domain, \ie, downscaling and upscaling to reconstruct the original HR images. 
	In this {experiment}, we investigate the impact of dual regression loss on HR data and show the results in Table~\ref{exp:dual_hr}.
	For convenience, we use 
	``DRN-S with dual HR'' to represent the model with the regression on both LR and HR images. From Table~\ref{exp:dual_hr}, 
	DRN-S yields comparable performance with ``DRN-S with dual HR'' while only needs half the computation cost. Thus, it is not necessary to apply the dual regression on HR images in practice. }

\begin{table}[h]
	\renewcommand\thetable{C}
	\small
	\centering
	\caption{The impact of the dual regression loss on HR data for 4$\times$ SR. We take DRN-S as the baseline model.}
	\begin{tabular}{c|c|c|c|c|c|c}
		\toprule
		Method & MAdds & Set5  & Set14 & BSDS100 & Urban100 & Manga109  \\
		\hline
		DRN-S with dual HR & 51.20G & 32.69 & 28.93 & 27.79 & 26.85 & 31.54 \\
		DRN-S (Ours) & 25.60G & 32.68 & 28.93 & 27.78 & 26.84 & 31.52 \\
		\bottomrule
	\end{tabular}%
	\label{exp:dual_hr}%
\end{table}%

\subsection{Impact of Different Degradation Methods to Obtain Paired Synthetic Data}
{In this experiment, we investigate the impact of different degradation methods to obtain paired synthetic data. We change kernel from Bicubic to Nearest and evaluate the adaptation models on BD data. From Table~\ref{exp:different_downsampling}, DRN-Adapt obtain similar results when we use different degradation methods to obtain the paired synthetic data. 
}

\begin{table}[h]
	\renewcommand\thetable{F}
	\small
	\centering
	\caption{{The impact of different degradation methods on DRN-Adapt for 8$\times$ SR.}}
	\begin{tabular}{c|c|c|c|c|c}
		\toprule
		Degradation Method & Set5  & Set14 & BSDS100 & Urban100 & Manga109  \\
		\hline
		Nearest & 24.60  & 23.03  & \textbf{23.60}  & \textbf{20.61}  & 21.46  \\
		Bicubic & \textbf{24.62} & \textbf{23.07} & {23.59} & {20.57} & \textbf{21.52} \\
		\bottomrule
	\end{tabular}
	\label{exp:different_downsampling}
\end{table}

\section{More Comparisons and Results} \label{sec:more_results_sup}
{For supervised super-resolution, we put more visual results in this section shown in Figures~\ref{fig:image_compare_4x_sup} and~\ref{fig:image_compare_8x_sup}, respectively.
	Considering the scenario with unpaired data, we put more visual results on real-world unpaired data (See Figure~\ref{fig:video_sup}). }
{From these results, our models are able to produce the images with sharper edges and clearer textures than state-of-the-art methods.}

\begin{figure*}[ht]
	\renewcommand\thefigure{C}
	\centering
	\subfigure{
		\includegraphics[width = 0.73\columnwidth]{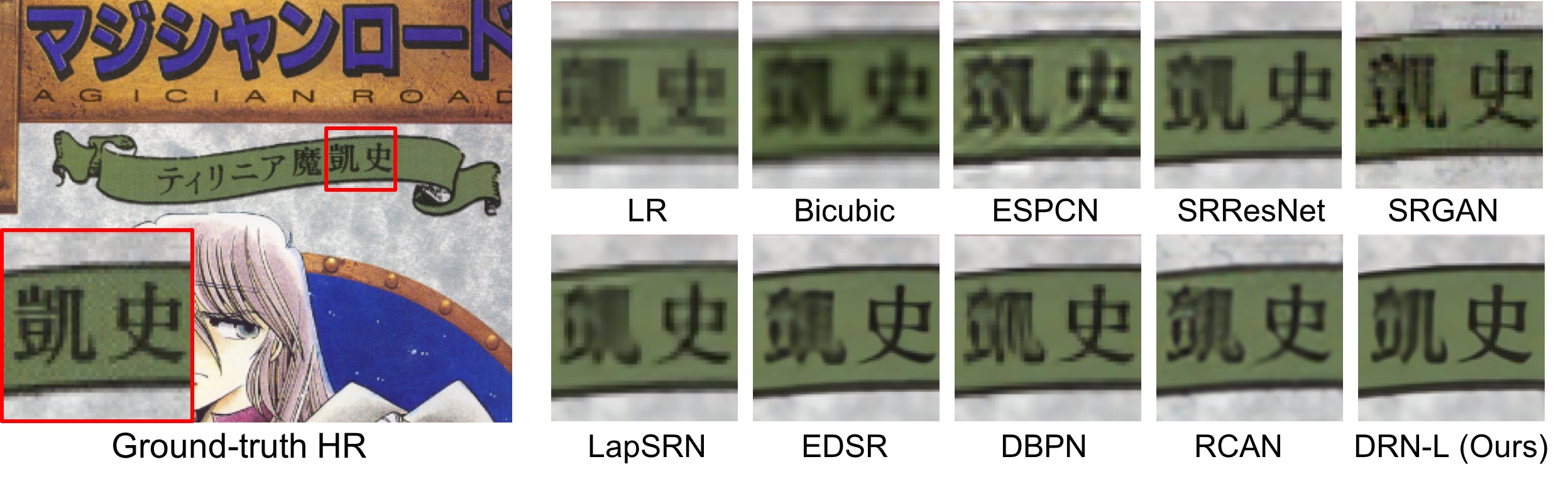}\label{fig:img4x_compare1_sup} 
	}
	\subfigure{
		\includegraphics[width = 0.73\columnwidth]{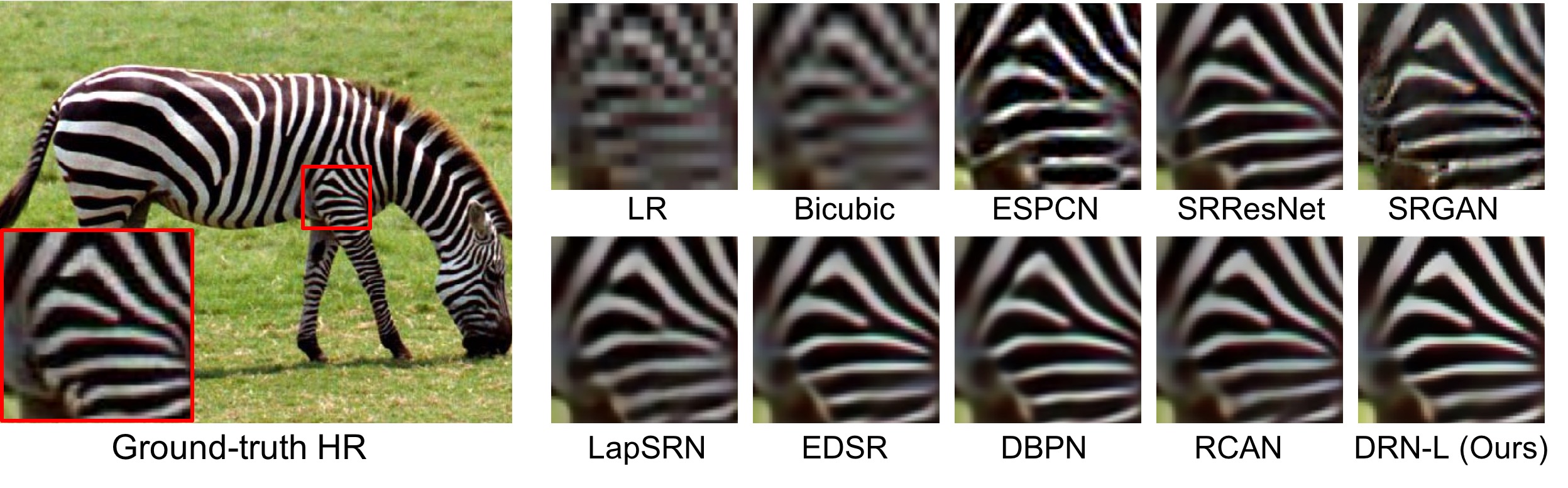}\label{fig:img4x_compare2_sup} 
	}	
	\subfigure{
		\includegraphics[width = 0.73\columnwidth]{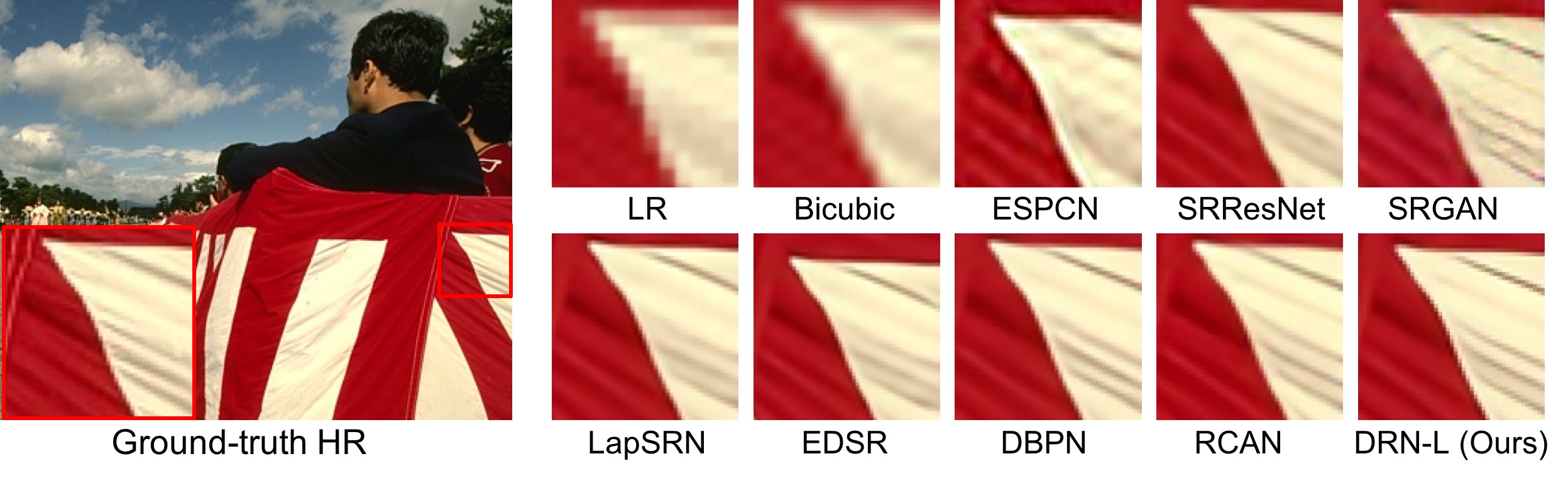}\label{fig:img4x_compare3_sup}
	}
	\caption{Visual comparison for $4\times$ image super-resolution on benchmark datasets.}
	\label{fig:image_compare_4x_sup}
\end{figure*}

\begin{figure*}[ht]
	\renewcommand\thefigure{D}
	\centering
	\subfigure{
		\includegraphics[width = 0.73\columnwidth]{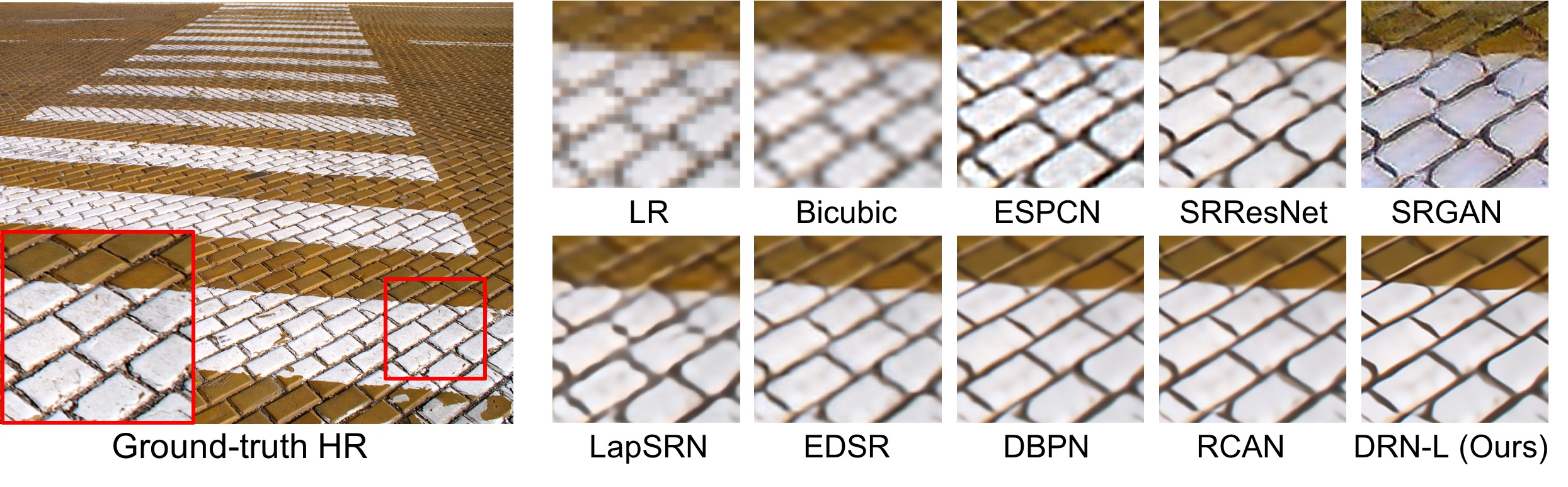}\label{fig:img8x_compare1_sup}
	}
	\subfigure{
		\includegraphics[width = 0.73\columnwidth]{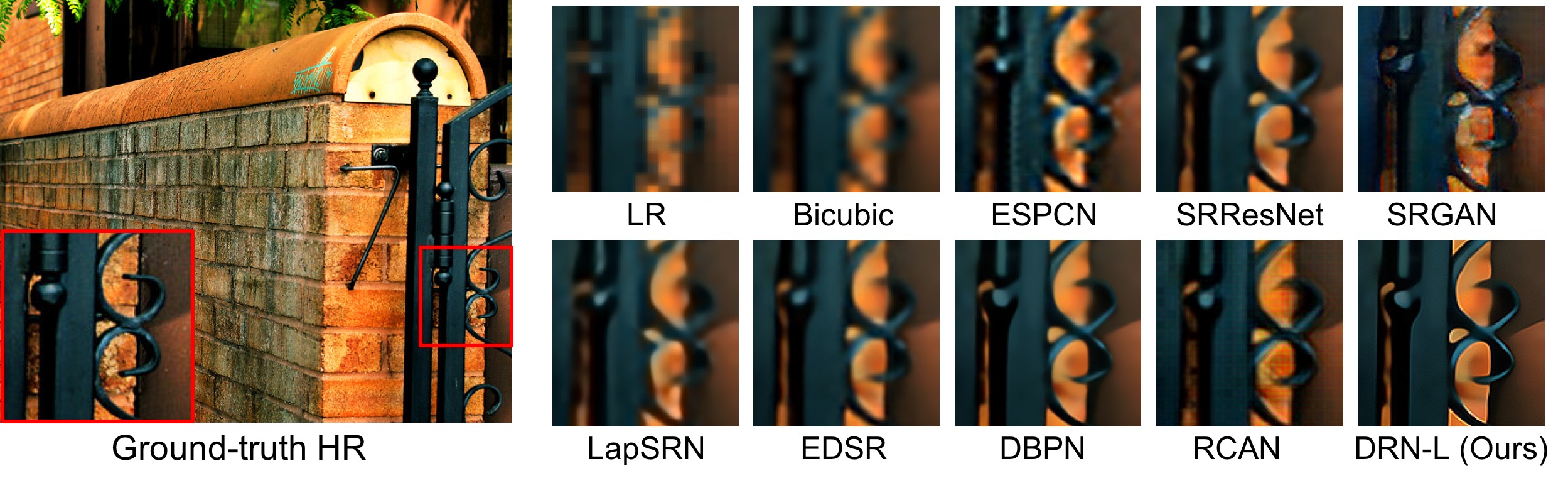}\label{fig:img8x_compare2_sup}
	}	
	\subfigure{
		\includegraphics[width = 0.73\columnwidth]{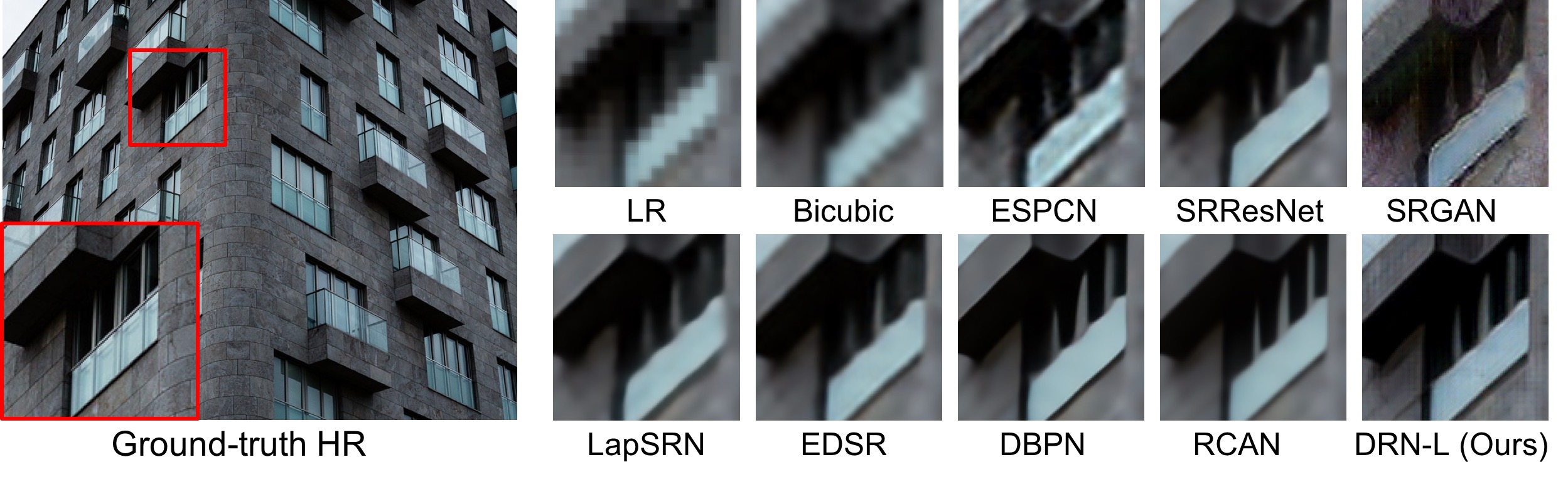}\label{fig:img8x_compare3_sup}
	}
	\caption{Visual comparison for $8\times$ image super-resolution on benchmark datasets.}
	\label{fig:image_compare_8x_sup}
\end{figure*}

\begin{figure*}[h]
	\renewcommand\thefigure{E}
	\centering
	\subfigure{
		\includegraphics[width = 0.47\columnwidth]{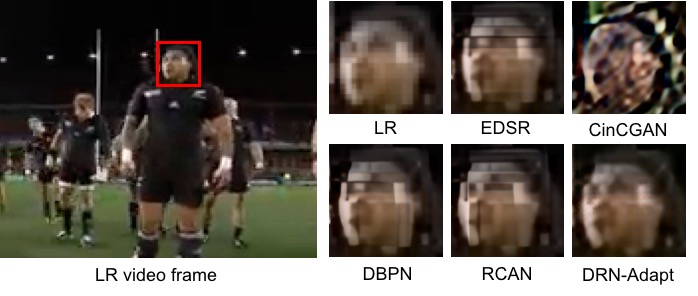}\label{fig:img4x_compare1}
	}
	\subfigure{
		\includegraphics[width = 0.47\columnwidth]{unpaired_video_2.jpg}\label{fig:img8x_compare1}
	}	
	\caption{Visual comparison of model adaptation for $8\times$ super-resolution on real-world video frames (from YouTube).}
	\label{fig:video_sup}
\end{figure*}

}

\end{document}